\documentclass[journal]{IEEEtran}

\usepackage{epsfig}
\usepackage{amssymb}
\usepackage{amsthm}
\usepackage{graphicx}
\usepackage{xcolor}
\usepackage{boldline,multirow}
\usepackage{hyperref}
\usepackage{amsmath}
\usepackage{amsthm}
\usepackage{makecell} 

\newtheorem{theorem}{Theorem}

\usepackage{pifont}

\begin{document}

\title{Robust Offset-Free Lane-Keeping Control Framework for Vision-Only Automated Vehicles}

\author{Der-Hau Lee
\thanks{The author was affiliated with the Department of Electrophysics,  National Yang Ming Chiao Tung University, Hsinchu 300, Taiwan. e-mail: derhaulee@gmail.com.}
}

\maketitle

\begin{abstract}
Achieving accurate, real-time lane-keeping control during high-speed cornering on large-curvature roads while ensuring robust vehicle stability is a critical challenge in autonomous driving. To address this challenge, this study developed a robust offset-free model predictive control (OF-MPC) framework for vision-based autonomous vehicle lane-keeping. The regulation problem in this framework is solved using several algorithms. Numerical simulation results indicated that the interpolation-tube-based constrained iterative linear quadratic regulator (itube-CILQR) algorithm outperformed solvers based on the standard  Interior Point Optimizer (IPOPT) and Operator Splitting Quadratic Program (OSQP), achieving a mean computation time of less than 5 ms. The itube-CILQR algorithm was then used for vehicle control in vision-based lane-keeping experiments in a simulation environment to quantify the perceptual uncertainty of a visual perception model (multitask UNet). This quantification was performed using the OF-MPC  framework for traffic scenes that were not encountered during model training. Overall, the developed OF-MPC  framework and the proposed method for characterizing the perceptual uncertainty of a visual perception system in unseen environments can contribute to the future development of advanced autonomous driving systems.
\end{abstract}

\begin{IEEEkeywords}
Autonomous vehicles, constrained iterative linear quadratic regulator (CILQR), deep neural network, tube-based robust control, offset-free model predictive control (OF-MPC).
\end{IEEEkeywords}

\section{Introduction}
\IEEEPARstart{O} {}ffset-free model predictive control (OF-MPC) is an advanced control strategy for solving the steady-state error problem in the presence of internal disturbances (plant\textendash model mismatch) or external disturbances (measurement noise), which cannot be accurately modeled in standard  MPC \cite{Raw03, Aut09, Aut12, Wal15, Raw26}. An OF-MPC system typically consists of three components, namely a state estimator, target selector, and regulator. By augmenting the system state with an integrating disturbance, OF-MPC can eliminate the discrepancies between the steady states of system models and achieve offset-free control performance. OF-MPC was originally developed for industrial process control, and its control mechanisms were then incorporated into economic MPC, which aims to optimize economic objectives, such as maximizing profit, rather than primarily track predefined setpoints \cite{Pan19, Li22}. Although numerous studies have employed OF-MPC  in automated industrial processes, limited studies have used it in automated automotive applications.

In \cite{Ge22}, an OF-MPC framework was developed for linear systems to address common model mismatches and disturbances in the lateral control of autonomous vehicles, including tire stiffness mismatches, steering angle offsets, and lateral slope disturbances, all of which contribute to steady-state tracking errors and degrade vehicle dynamic performance. The framework integrates these mismatches and disturbances into a single disturbance term, which is estimated using the Kalman filter; the term is then incorporated into an optimization process to  eliminate lateral steady-state tracking errors. The study revealed that this framework could efficiently complete  calculations in less than 7 ms. On the basis of this framework, \cite{Ge22c}  developed an OF-MPC framework for the longitudinal and lateral control of autonomous vehicles to achieve offset-free trajectory tracking. In this framework, the friction circle is incorporated into motion constraints to prevent vehicle tires  from losing adhesion. In addition, nonlinear constraints derived from the friction circle are  transformed  into linear constraints through polygon approximation. This framework exhibited an average computation time of approximately 5 ms in simulations and real-world experiments with a vehicle speed of  25.2 km/h. In \cite{Yi24}, an OF-MPC scheme incorporating a moving-horizon estimation (MHE) approach was developed to compensate for the heading bias resulting from localization uncertainty in lateral motion planning for autonomous buses; the measurable variables were yaw rate, heading error, and lateral position error, whereas the disturbance variables were heading-error bias, road-curvature bias, and control-input bias. Thus, the disturbance vector had the same number of dimensions (three)\ as the measurement vector did. This scheme required average computational times of 1.26 and 3.22 ms for solving the MPC and MHE problems, respectively, in lateral motion planning during vehicle tests with a maximum speed of 40 km/h. An OF-MPC framework was proposed in \cite{Che24} to address model mismatches, parameter variations, and unexpected disturbances in integrated brake system control for autonomous vehicles. The proposed framework incorporates the total disturbance as a measurable disturbance input. The performance of the framework was assessed through simulations and experiments, and its suitability for braking systems was verified on a microchip; the framework exhibited an average computation time of 70 $\mu$s. In \cite{Dai25}, an offset-free economic MPC framework was developed to minimize the tracking error and control effort of an autonomous electric vehicle while accounting for economic criteria. The offset-free mechanism of this framework compensates for disturbances and prediction errors resulting from self-triggered control operations. Real-world experiments revealed that the developed framework effectively tracked the reference trajectory of an autonomous electric vehicle operating at a speed of 2 m/s, requiring less than 0.05 s to generate each control update. In summary, the preceding discussion suggests that OF-MPC is a promising approach for controlling vehicles in resource-limited automotive applications owing to its low computational burden and real-time computation performance. 

Lane-keeping control is a type of lateral vehicle control, although the corresponding MPC problem typically involves constrained nonlinear optimization because of road curvature \cite{Xu20}. Solving this problem becomes particularly challenging in high-speed cornering scenarios  on roads with tight curves, when limited on-board computational resources may hinder the generation of real-time problem solutions and timely control actions, thereby compromising driving safety. To mitigate this problem, we previously proposed a tube-based robust control strategy for vision-guided autonomous vehicles \cite{Lee26}. Specifically, a computationally efficient interpolation-tube-based constrained iterative linear quadratic regulator (itube-CILQR) algorithm was developed and incorporated into the vision perception system proposed in \cite{Lee24} for autonomous vehicle lane keeping.  The feasibility of this algorithm was validated through numerical simulations and vision-based experiments conducted  in The Open Racing Car Simulator (TORCS) \cite{Wym13, Che15, Car15, Li19, Liu24t}. Nevertheless, certain limitations were observed in \cite{Lee26}. First, the itube-CILQR algorithm does not incorporate output feedback, which is crucial in  autonomous driving applications. Second, this algorithm does not address lateral steady-state tracking errors, resulting in lateral position deviations from the optimal vehicle trajectory along curvy road sections. Third, the perceptual uncertainty of the adopted visual perception system \cite{Lee24} was not quantified in previously unseen  scenarios. Quantifying perceptual uncertainty in unseen environments is crucial for developing high-performance visual perception and autonomous driving systems.

To address the aforementioned limitations, the present study developed a robust OF-MPC framework, incorporated it into the visual perception system \cite{Lee24}, and then assessed the perceptual uncertainty of this system in unseen environments. By modeling the nonlinear road curvature term as an integrating disturbance, the OF-MPC framework efficiently solves the lane-keeping control problem in high-speed cornering scenarios on curvy roads. It achieves near-zero steady-state tracking errors while maintaining robust, stable, and real-time performance. The main contributions of this study are outlined as follows:
\begin{enumerate}
  \item We developed a robust offset-free lane-keeping control framework for autonomous driving that requires only visual information as input data. The regulation problem in the offset-free control framework is solved using the itube-CILQR control algorithm, which ensures the robustness of real-time lane-keeping control without steady-state errors for autonomous vehicles navigating tight curves.
  \item We validated the performance of the proposed control framework through numerical simulations and vision-based experiments. The vision-based experiments were conducted in TORCS in high-speed cornering scenarios on roads with large curvature. Moreover, we quantified the perceptual uncertainty of the adopted visual perception framework through vision-based experiments in unseen driving scenes.
\end{enumerate}

The remainder of this paper is organized as follows. Section II presents the adopted lane-keeping dynamics model. Section III describes the developed OF-MPC framework for lane keeping. Section IV introduces the itube-CILQR algorithm. Section V presents the experimental results. Finally, Section VI provides the conclusion.

\section{Vehicle Lateral Dynamics Model for the Lane-Keeping Problem}
This study used a linearized bicycle model  to develop a representation of vehicle lane-keeping dynamics \cite{Xu20, Lee19,Lev22}. Compared with nonlinear models, this linear model can effectively reduce the computational complexity of MPC, thereby facilitating the real-time implementation of this approach in time-critical scenarios, such as high-speed cornering on tightly curvy roads. The discrete-time bicycle model used in the proposed OF-MPC framework can be described by the following state-space representation:
\begin{subequations}
\begin{equation}
{\bf x}_{i + 1}  \equiv {\bf f}\left( {{\bf x}_i ,{\bf {u}}_i ,{\bf {w}}_i} \right)  = {\bf A{x}}_i  + {\bf B{u}}_i+ {\bf w},
\end{equation}
\begin{equation}
{\bf y}_{i + 1}    = {\bf C{x}}_i, 
\end{equation}
\end{subequations}
where  $\bf A$, $\bf B$, and $\bf w$ are state-space matrices. These matrices are expressed as follows: 
\[
\begin{array}{l}
 {\bf A} = \left[ {\begin{array}{*{20}c}
   {\alpha _{00} } & {\alpha _{01} } & 0 & 0  \\
   0 & {\alpha _{11} } & {\alpha _{12} } & {\alpha _{13} }  \\
   0 & 0 & {\alpha _{22} } & {\alpha _{23} }  \\
   0 & {\alpha _{31} } & {\alpha _{32} } & {\alpha _{33} }  \\
\end{array}} \right], \\ 
 {\bf B} = \left[ {\begin{array}{*{20}c}
   0  \\
   {\beta _0 }  \\
   0  \\
   {\beta _1 }  \\
\end{array}} \right],\quad {\bf w} = \kappa\left[ {\begin{array}{*{20}c}
   0  \\
   { \gamma _0 }  \\
   0  \\
   { \gamma _1 }  \\
\end{array}} \right] . \\ 
 \end{array}
\]
The corresponding matrix coefficients are given as follows:
\[
\begin{array}{l}
\alpha _{00} = \alpha _{22} = 1, \quad \alpha _{01} = \alpha _{23} = dt, \\
\alpha _{11}  = 1 - \frac{{2\left( {C_{\alpha f}  + C_{\alpha r} } \right)dt}}{{mv_x }},\quad \alpha _{12}  = \frac{{2\left( {C_{\alpha f}  + C_{\alpha r} } \right)dt}}{m}, \\
\alpha _{13}  =  - \frac{{2\left( {l_f C_{\alpha f}  - l_r C_{\alpha r} } \right)dt}}{{mv_x }},\quad \alpha _{31}  =  - \frac{{2\left( {l_f C_{\alpha f}  - l_r C_{\alpha r} } \right)dt}}{{I_z v_x }} , \\
\alpha _{32}  = \frac{{2\left( {l_f C_{\alpha f}  - l_r C_{\alpha r} } \right)dt}}{{I_z }},\quad \alpha _{33}  = 1 - \frac{{2\left( {l_f^2 C_{\alpha f}  + l_r^2 C_{\alpha r} } \right)dt}}{{I_z v_x }}, \\
\beta _0 = {\frac{{2C_{\alpha f} dt}}{m}} ,\quad \beta _1  = \frac{{2l_f C_{\alpha f} dt}}{{I_z }}, \\
\gamma _0  =  - \frac{{2\left( {l_f C_{\alpha f}  - l_r C_{\alpha r} } \right)dt}}{m} - v_x^2 dt, \quad
\gamma _1  =  - \frac{{2\left( {l_f^2 C_{\alpha f}  + l_r^2 C_{\alpha r} } \right)dt}}{{I_z }}.\\
\end{array}
\]
Here, the control input ${\bf {u}} \equiv  \left[  {\delta}  \right]$, where $\delta$ is the front-wheel steering angle. The state vector  ${\bf x}  = \left[ {\begin{array}{*{20}c}
   {x_0 } & {x_1 } & {x_2 } & {x_3 }  \\
\end{array}} \right]^T  \equiv  \left[ {\begin{array}{*{20}c}
   \Delta  & {\dot \Delta } & \theta  & {\dot \theta }  \\
\end{array}} \right]^T$ consists of the ego vehicle's lateral offset with respect to the lane centerline (${\Delta}$),  its heading angle (${\theta}$), and the derivatives of these two terms (${\dot \Delta }$ and ${\dot \theta }$). The matrix ${\bf w}$, which includes the road curvature $\kappa$, represents an external  disturbance. The overall system expressed in (1a) and (1b) is subject to the stage constraints ${\bf x}_{i \ne N} \in \mathbb{X}$ and ${\bf u}_{i \ne N} \in \mathbb{U}$, the terminal constraint ${\bf x}_{N} \in \mathbb{T}$, and the disturbance ${\bf w} \in \mathbb{W}$. Moreover, the model parameters $l_f$ and $l_r$ are the distances from the center of gravity of the ego vehicle to the front and rear axles, respectively; $C_{\alpha f}$ and $C_{\alpha r}$ are the tire cornering stiffness values of the front and rear tires, respectively;  $m$ is the vehicle mass; $I_z$ is the vehicle moment of inertia about the vertical direction at the center of mass; and $dt$ is the sampling time. The vehicle and model parameters selected in this study are listed in Table I. A disturbance-free nominal system of (1a) is defined as follows:
\begin{equation}
{\bf \bar x}_{i + 1}  = {\bf A \bar x}_i  + {\bf B \bar u}_i ,
\end{equation}
where $\bf \bar  x$ and $\bf \bar  u$  are the associated nominal state and control vectors, respectively.

\begin{table}[!t]
\caption{Model and Vehicle Parameters Used in This Study}
\begin{center}
\begin{tabular}{llll}
\Xhline{1.5pt} 
Symbol &  Values & Symbol &  Values  \\
\hline
$dt$ &  0.01 s & ${C_{\alpha f} }$  & 80000 N/rad  \\
$v_x$ & \{20.0, 21.1\} m/s  &  ${C_{\alpha r} }$ &  80000 N/rad  \\
$I_z$ &   2000 kgm$^{2}$ &  $l_f$  & 1.27 m \\
$m$ &  1150 kg & $l_r$ & 1.37 m  \\
\Xhline{1.5pt} 
\end{tabular}
\end{center}
\end{table}

The output signal {\bf y}  is a vector comprising the measurable state variables of the controlled system. Additionally, the output matrix {\bf C} maps system states to outputs on the basis of the types of sensors in the ego vehicle. Specifically, the state variable $\dot \theta$ can be obtained from inertial measurement unit measurements \cite{Lev22}, whereas the state variable $\dot \Delta$ (or the sideslip angle) can be determined by combining inertial navigation system  and GPS measurements \cite{JCG06}. The state variables $\Delta$ and $\theta$ can be estimated using a trained deep neural network model that extracts features from images captured by a front-facing monocular camera \cite{Lee24, Lee25}. In general, an increase in the number of measurable state variables improves control performance. Herein, we consider a challenging scenario of vision-only autonomous driving, where the ego vehicle is equipped with only a single camera and only the states $\Delta$ and $\theta$ can be observed. Therefore, the output vector is expressed as ${\bf y} \equiv \left[ {\begin{array}{*{20}c}
   \Delta  & \theta   \\
\end{array}} \right]^T 
$, and the output matrix ${\bf C}$ is defined as follows:
\[
{\bf C} \equiv \left[ {\begin{array}{*{20}c}
   1 & 0 & 0 & 0  \\
   0 & 0 & 1 & 0  \\
\end{array}} \right].
\]

In summary, the state, input, and output vectors required to describe lane-keeping dynamics are defined as 
${\bf x} \in \mathbb{R}^{n=4}  $, ${\bf u} \in \mathbb{R}^{m=1} $, and ${\bf y} \in \mathbb{R}^{p=2} $, respectively. The dimensions of ${\bf u}$ and ${\bf y}$ ($m$ and $p$, respectively) have major effects on the performance of the proposed OF-MPC framework, as demonstrated in the following section.

\section{Proposed OF-MPC Framework for Lane Keeping}
This section describes the proposed OF-MPC framework for lane keeping. This framework is based on previous research \cite{Raw03, Lee26, Raw17} and consists of three primary components: a state estimator, target selector, and  regulator. Section III-A describes each of these components, and Section III-B\ explains the vision-based robust control strategy of the proposed framework, which employs an output feedback strategy. In this paper, the operators $\oplus$ and $\ominus$ denote the Minkowski sum of and Pontryagin difference between sets, respectively.

\subsection{OF-MPC Framework}
\textbf{Initialization}. First, an integrating disturbance vector ${\bf  d} = \left[ {\begin{array}{*{20}c}
    {d_0 }  & {d_1 } \\
\end{array}} \right]^T$ (${\bf d}\in  \mathbb{W}  \subseteq    \mathbb{R}^{n_{d}=2}  $) is incorporated into the nominal state-space model expressed in (2) to compensate for the plant\textendash model mismatch resulting from the curvature term ${\bf w}$ in (1). The resulting model is expressed as follows:
\begin{subequations}
\begin{equation}
{ \bf \bar x}_{i + 1} \equiv {\bf \bar f}\left( {{\bf \bar x}_i ,{\bf \bar u}_i,{\bf  d}_i } \right)  = {\bf A \bar x}_i  + {\bf B \bar u}_i + {{\bf B}_d {\bf  d}}_i,
\end{equation}
\begin{equation}
{\bf  d}_{i + 1}    = {\bf  d}_i, 
\end{equation}
\begin{equation}
{\bf \bar y}_{i + 1}    = {\bf C \bar x}_i +{{\bf C}_d {\bf  d}}_i, 
\end{equation}
\end{subequations}
where
\[
{\bf B}_d  = \left[ {\begin{array}{*{20}c}
   0 & 0 & 0 & 1  \\
   0 & 1 & 0 & 0  \\
\end{array}} \right]^T , \quad {\bf C}_d  = {\bf 0}.
\]
Here, the augmented system satisfies the detectability condition  for the selected matrices ${\bf B}_d$ and ${\bf C}_d$. Because $p>m$ in the considered case, the system cannot be controlled using  OF-MPC  for all the measured variables $\Delta$ and $\theta$  \cite{Raw03, Raw17}. To address this limitation, a controlled-variable vector ${\bf r}\in \mathbb{R}^{n_{c}} $ is defined by selecting a subset of the measured outputs ${\bf y}$ through a linear transformation ${\bf r} = {\bf Hy}$, where ${\bf H}$ is an appropriately selected matrix. This vector then enables offset-free tracking of the selected controlled variables (elements of $\bf r$) under appropriate conditions when $n_c \le m$. In this paper, the most crucial variable in the lane-keeping problem, namely the lateral position deviation ($\Delta$), is selected as the controlled variable. Thus, the controlled-variable vector is defined as  ${{\bf r}_{sp}} ={{{\bf Hy}_{sp}}}=[\Delta _{sp}]\in \mathbb{R}^{1}$, where $\Delta _{sp} = 0$ and  ${\bf H} = \left[ {\begin{array}{*{20}c}
   1 & 0  \\
\end{array}} \right]$.

\begin{table}[!t]
\caption{Ranges of the Control Parameters Considered in This Study}
\begin{center}
\begin{tabular}{llll}
\Xhline{1.5pt} 
Symbol &  Values & Symbol &  Values  \\
\hline
$N$ & 30 & ${\bf Q}$ & diag([20, 1, 20, 1])   \\
$\Delta$  & [-2.0, 2.0] m & ${\bf R}$ & 60$\bf I$  \\
 $\dot \Delta $ &  [13.0, 13.0] m/s & ${\bf Q}'$ & diag([1, 1]) \\
$\theta  $  & [-$\pi  /2$, $\pi  /2$]  rad &  ${\bf R}'$ & 60$\bf I$\\
$\dot \theta  $ & [-10.0, 10.0] rad/s & ${\bf S}$ & 50$\bf I$ \\
$\delta$ &  [-$\pi  /6$, $\pi  /6$]  rad&  $q_s$ & 100 \\
$\kappa$ &  [-0.1, 0.1] 1/m& $q_u$ & 1 \\
${\bf Q}_s$ &  diag([50, 50])& $q_{l1}$ & 50 \\
${\bf R}_s$ &  50$\bf I$&$q_{l2}$ & 20 \\
\Xhline{1.5pt} 
\end{tabular}
\end{center}
\end{table}

\textbf{State Estimator}. The proposed OF-MPC framework begins its iterative computation process by using the Kalman filter to estimate both  current states and disturbances on the basis of measurement outputs. To facilitate this estimation, disturbance terms are incorporated into the augmented system model (3) as follows:
\begin{subequations}
\begin{equation}
{\bf \hat x}_{i + 1}^{a}  = {\bf A}^{a} {\bf \hat x}_i^{a}  + {\bf B}^{a} {\bf u}_i  + {\bf v}_i^x  ,
\end{equation}
\begin{equation}
{\bf \hat y}_i^{}  = {\bf C}^{a} {\bf \hat x}_i^{a}  + {\bf v}_i^y  . 
\end{equation}
\end{subequations}
The augmented state-space matrices are given as follows:
\[
\begin{array}{l}
 {\bf A}^{a}  = \left[ {\begin{array}{*{20}c}
   {\bf A} & {{\bf B}_{\bf d} }  \\
   {\bf 0} & {\bf I}  \\
\end{array}} \right],\quad {\bf B}^{a}  = \left[ {\begin{array}{*{20}c}
   {\bf B}  \\
   {\bf 0}  \\
\end{array}} \right], \\ 
 {\bf C}^{a}  = \left[ {\begin{array}{*{20}c}
   {\bf C} & {{\bf C}_{\bf d} }  \\
\end{array}} \right]. \\ 
 \end{array}
\]
Here ${\bf v}^x$ and ${\bf v}^y$ represent zero-mean white noise sequences with the covariance matrices ${\bf Q}^x$ = diag([$10^{-5}$, $10^{-5}$, $10^{-5}$, $10^{-5}$, 1.0, 1.0])  and ${\bf Q}^y$ = diag([$10^{-5}$, $10^{-5}$]), respectively. The augmented estimated state ${\bf \hat x}^{a}  = \left[ {\begin{array}{*{20}c}
   {{\bf \hat x}}^T & {{\bf \hat d}}^T  \\
\end{array}} \right]^T$ can be updated  using the following expression:
\begin{equation}
\begin{split}
{\bf \hat x}^{a,+}   &= {\bf \hat x}^{a} + {\bf L}\left( {{\bf y} - {\bf \hat y}} \right) \\ 
&= {\bf \hat x}^{a} + {\bf L}\left( {{\bf C}^{a} {\bf x}^{a} - {\bf C}^{a} {\bf \hat x}^{a}} \right).
\end{split}
\end{equation}
The steady-state Kalman gain matrix ${\bf L}$ is expressed as follows:
\begin{equation}
{\bf L} = {\bf P}^{a}{\bf C}^{a,T} \left( {{\bf C}^{a} {\bf P}^{a}{\bf C}^{a,T}  + {\bf Q}^y } \right),
\end{equation}
where ${\bf P}^{a}$ is the covariance matrix of the state predictions, which is initialized as ${\bf P}^{a,0}$ = $10^{-5} \times {\bf Q}^x$. The updated covariance matrix for the next iteration is given as follows:
\begin{equation}
{\bf P}^ {a,next}   = {\bf A}^{a} \left( {{\bf I} - {\bf LC}^{a} } \right){\bf P}^{a}{\bf A}^{a,T} {\rm  + }{\bf Q}^x .
\end{equation}
The resulting updated estimated state (${\bf \hat x}^ +  $) and disturbance (${\bf \hat d}^ +  $) are used in the following steps.

\textbf{Target Selector}. Given the estimated disturbance ${\bf \hat d}^ +  \in  \mathbb{W}$, the target state and control vectors (${\bf x}_s$ and ${\bf u}_s$, respectively) can be obtained by solving the following  optimization problem:

Problem P1 = P1(${\bf \hat d}^ +$, ${\bf r}_{sp}$, $\mathbb{X}$/$\mathbb{U}$):
\begin{subequations}
\begin{equation}
\mathop {\min }\limits_{{\bf x}_s,{\bf u}_s}  J_{xs}  + J_{us} ,
\end{equation}
\begin{equation}
J_{xs}  = \left( {{\bf Cx}_s  + {\bf C}_d {\bf \hat d}^ +   - {\bf y}_{sp} } \right)^T {\bf Q}_s \left( {{\bf Cx}_s  + {\bf C}_d {\bf \hat d}^ +   - {\bf y}_{sp} } \right),
\end{equation}
\begin{equation}
J_{us}  = {{\bf u}_s^T {\bf R}_s{\bf u}_s } 
\end{equation}
\end{subequations}
subject to
\begin{subequations}
\begin{equation}
\left[ {\begin{array}{*{20}c}
   {{\bf I} - {\bf A}} & { - {\bf B}}  \\
   {{\bf HC}} & {\bf 0}  \\
\end{array}} \right]\left[ {\begin{array}{*{20}c}
   {{\bf x}_s }  \\
   {{\bf u}_s }  \\
\end{array}} \right] = \left[ {\begin{array}{*{20}c}
   {{\bf B}_d {\bf \hat d}^ +  }  \\
   {{\bf r}_{sp}  - {\bf HC}_d {\bf \hat d}^ +  }  \\
\end{array}} \right],
\end{equation}
\begin{equation}
{\bf u}_s  \in \mathbb{U},
\end{equation}
\begin{equation}
{\bf Cx}_s  + {\bf C}_d {\bf \hat d}^ +   \in \mathbb{Y}. 
\end{equation}
\end{subequations}
Here, $\mathbb{Y}$ corresponds to the first and the third components of $\mathbb{X}$, and the penalty matrices ${\bf Q}_s$ and ${\bf R}_s$ are positive definite. Problem P1 has low computational complexity and can be solved quickly by using standard optimization software.

\textbf{Regulator}. The regulator (or controller) in the proposed OF-MPC framework solves an MPC problem to guide the system toward the target. Given the target state vector (${\bf x}_s$) and control input vector (${\bf u}_s$) obtained using the steady-state target selector as well as the state and disturbance vectors (${\bf \hat x}^ +  $ and ${\bf \hat d}^ +  $, respectively) estimated using the state estimator, a quadratic programming problem is solved  to minimize deviations  of the  nominal state  sequence (${\bf \bar X} \equiv \left\{ {{\bf \bar x}_0 ,{\bf \bar x}_1 ,...,{\bf \bar x}_N } \right\}$) and nominal control sequence (${\bf \bar U} \equiv \left\{ {{\bf \bar u}_0 ,{\bf \bar u}_1 ,...,{\bf \bar u}_{N - 1} } \right\}$) from their  steady-state targets (${\bf x}_s$ and ${\bf u}_s$, respectively) over a prediction horizon with a length of $N$. This optimization problem is formulated as follows:

Problem P2 = P2\textbf{(${\bf \hat x}^ +$, ${\bf \hat d}^ +$, ${\bf x}_s$/${\bf u}_s$, $\mathbb{X}$/$\mathbb{T}$/$\mathbb{U}$)}:
\begin{subequations}
\begin{equation}
\mathop {\min }\limits_{{\bf \bar X},{\bf \bar U}} J =  J_x +J_u+ J_f  ,
\end{equation}
\begin{equation}
J_x  = \sum\limits_{i = 0}^{N - 1} { \left( {{\bf \bar x}_i  - {\bf x}_s } \right)^T {\bf Q}\left( {{\bf \bar x}_i  - {\bf x}_s } \right) } ,
\end{equation}
\begin{equation}
J_u  = \sum\limits_{i = 0}^{N - 1} { \left( {{\bf \bar u}_i  - {\bf u}_s } \right)^T {\bf R}\left( {{\bf \bar u}_i  - {\bf u}_s } \right) } ,
\end{equation}
\begin{equation}
J_f  = \left( {{\bf \bar x}_N  - {\bf x}_s } \right)^T {\bf P }\left( {{\bf \bar x}_N  - {\bf x}_s } \right)
\end{equation}
\end{subequations}
subject to
\begin{subequations}
\begin{equation}
{\bf \bar x}_{i + 1}  = {\bf \bar f}\left( {{\bf \bar x}_i ,{\bf \bar u}_i,{\bf \hat d}^ + } \right)  ,\quad 0 \le i < N,
\end{equation}
\begin{equation}
{\bf \bar x}_{i\ne N} \in \mathbb{X},
\end{equation}
\begin{equation}
{\bf \bar x}_{N} \in \mathbb{T}, 
\end{equation}
\begin{equation}
{\bf \bar u}_{i\ne N} \in \mathbb{U}, 
\end{equation}
\begin{equation}
{\bf \bar x}_0 = {\bf \hat x}^ + . 
\end{equation}
\end{subequations}
Here ${\bf Q}$ and ${\bf R}$ are symmetric positive-definite weighting matrices with stage costs of $J_u$ and $J_x$, respectively. The penalty matrix ${\bf P}$, which has a terminal cost of $J_f$, is obtained by solving the discrete-time algebraic Riccati equation  (DARE) \cite{Raw26a}, which is expressed as follows:
\begin{equation}
{\bf P} = {\bf A}^T {\bf PA} + {\bf Q} - {\bf A}^T {\bf PB}({\bf B}^T {\bf PB} + {\bf R})^{ - 1} {\bf B}^T {\bf PA}.
\end{equation}
The related control parameters  and constraints used in this study are listed in Table II.

The control signal for the  actual system is then set to ${\bf u} = {\bf \bar u}^*  + {\bf K}({\bf \hat x} - {\bf \bar x})$ , where ${\bf \bar u}^*$ is the first component of the optimal control sequence of Problem P2. An extremely small control gain of $\left| {\bf K} \right| \approx 0$ is selected here for simplicity. The next augmented estimated state is then computed as follows:
\begin{equation}
{\bf \hat x}^{a,next}  = {\bf A}^{a} {\bf \hat x}^{a,+}  + {\bf B}^{a} {\bf \bar u}^* ,
\end{equation}
This state and the measured outputs are substituted into (5) in the next iteration. The updated actual state variables can be obtained by substituting ${\bf u}$ into (1a). Fig. 1 illustrates the architecture of the proposed OF-MPC framework for closed-loop testing  of the controlled system.

\begin{figure}[!t]
\centerline{\includegraphics[scale=0.43]{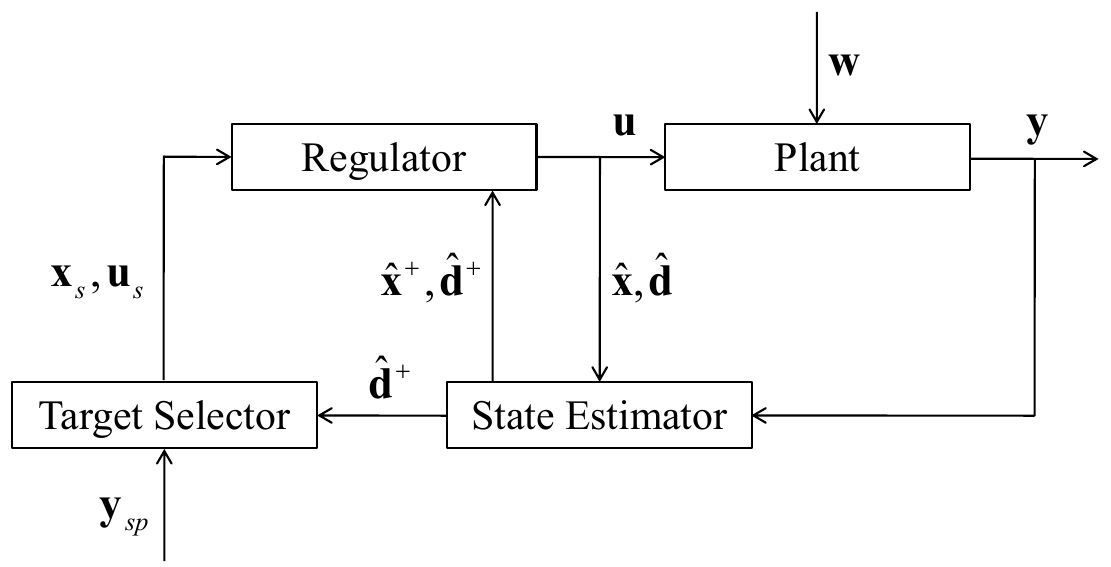}}
\caption{Architecture of the proposed OF-MPC framework for closed-loop testing of the controlled system.}
\end{figure}

\subsection{Vision-Guided Robust Output Feedback Strategy}

In the proposed framework, a tube-based output feedback MPC method \cite{Raw06, Raw17} is employed to ensure system  robustness in the presence of the nonzero road curvature term $\bf w$. Because the set operations required for  the augmented system (3) may have high computational complexity, we use a simplified model (1) for tube-based MPC, as in our previous study \cite{Lee26}. A\ notable improvement in this study over \cite{Lee26} is the use of  an output feedback strategy. Thus, the tightened constraints for the original four-dimensional system ${\bf x}$ are approximated using the two-dimensional subsystem ${\bf x}' \equiv  \left[ {\begin{array}{*{20}c} {\dot \Delta }  & {\dot \theta }  \\
\end{array}} \right]^T \in \mathbb{X}'/\mathbb{T}'$, with the control vector being ${\bf u}' \equiv  \left[  \delta  \right] \in \mathbb{U}'$. Here, $\mathbb{X}'$, $\mathbb{T}'$, and $\mathbb{U}'$ are the constraint sets corresponding to the original sets $\mathbb{X}$, $\mathbb{T}$, and $\mathbb{U}$, respectively. The relevant discrete-time dynamics model is represented as follows \cite{Lev22}:
\begin{subequations}
\begin{equation}
{\bf x'}_{i + 1}  = {\bf A'x'}_i  + {\bf B'u'}_i  + {\bf v}_i^{x'}.
\end{equation}
\begin{equation}
{\bf y'}_i  = {\bf C'} {\bf x'}_i  + {\bf v}_i^{y'}  , 
\end{equation}
\end{subequations}
The corresponding state-space matrices are expressed as follows:
\[
\begin{array}{l}
 {\bf A'} = \left[ {\begin{array}{*{20}c}
   {\alpha _{11} } & {\alpha _{13}  - v_x dt}  \\
   {\alpha _{31} } & {\alpha _{33} }  \\
\end{array}} \right], \\ 
 {\bf B'} = \left[ {\begin{array}{*{20}c}
   {\beta  _0 }  \\
   {\beta  _1 }  \\
\end{array}} \right], \quad {\bf C'} = \left[ {\begin{array}{*{20}c}
   1 & 0  \\
   0 & 1  \\
\end{array}} \right]. \\ 
 \end{array}
\]
Here ${\bf v}^{x'}  \in \mathbb{V}^{x'}$ represents the state disturbance associated with the road curvature bounds defined by $\mathbb{W}$ and ${\bf v}^{y'}  \in \mathbb{V}^{y'}$ represents the predefined output
disturbance. These disturbances are expressed as follows:
\[
\begin{array}{l}
  \mathbb{V}^{x'} \equiv \left[ {\begin{array}{*{20}c}
   { {\kappa _{\min } }\gamma _0 } & {{\kappa _{\max } }\gamma _0 }  \\
\end{array}} \right] \times \left[ {\begin{array}{*{20}c}
   { {\kappa _{\min } }\gamma _1 } & {{\kappa _{\max } }\gamma _1 }  \\
\end{array}} \right]   , \\ 
\mathbb{V}^{y'} \equiv \left[ {\begin{array}{*{20}c}
   { - 0.05} & {  0.05}  \\
\end{array}} \right] \times \left[ {\begin{array}{*{20}c}
   { - 0.05} & {  0.05}  \\
\end{array}} \right]   , \\ 
 \end{array}
\]
where ${\kappa _{\max } }$/${\kappa _{\min } }$ = 0.1/\Pisymbol{@symbol}{45}0.1 (see Table II).

The control gain ${\bf K'}$ for the aforementioned subsystem is set to the LQR gain as follows:
\begin{equation}
{\bf K'} =  - \left( {{\bf B'}^T {\bf P'B'} + {\bf R'}} \right)^{ - 1} {\bf B'}^T {\bf P'A'}.
\end{equation}
Here matrix ${\bf P'}$ is the solution of the DARE. The matrices ${\bf Q'}$ and ${\bf R'}$ have the same parameters as in Problem P2. The observer gain is set to
${\bf L'} = \left[ {\begin{array}{*{20}c}
   {0.057} & {0.057}  \\
\end{array}} \right]$ so that all eigenvalues of $\bf A' + \bf L'C'$ are within the unit circle.

\begin{figure*}[!t]
{\includegraphics*[width=0.25\linewidth]{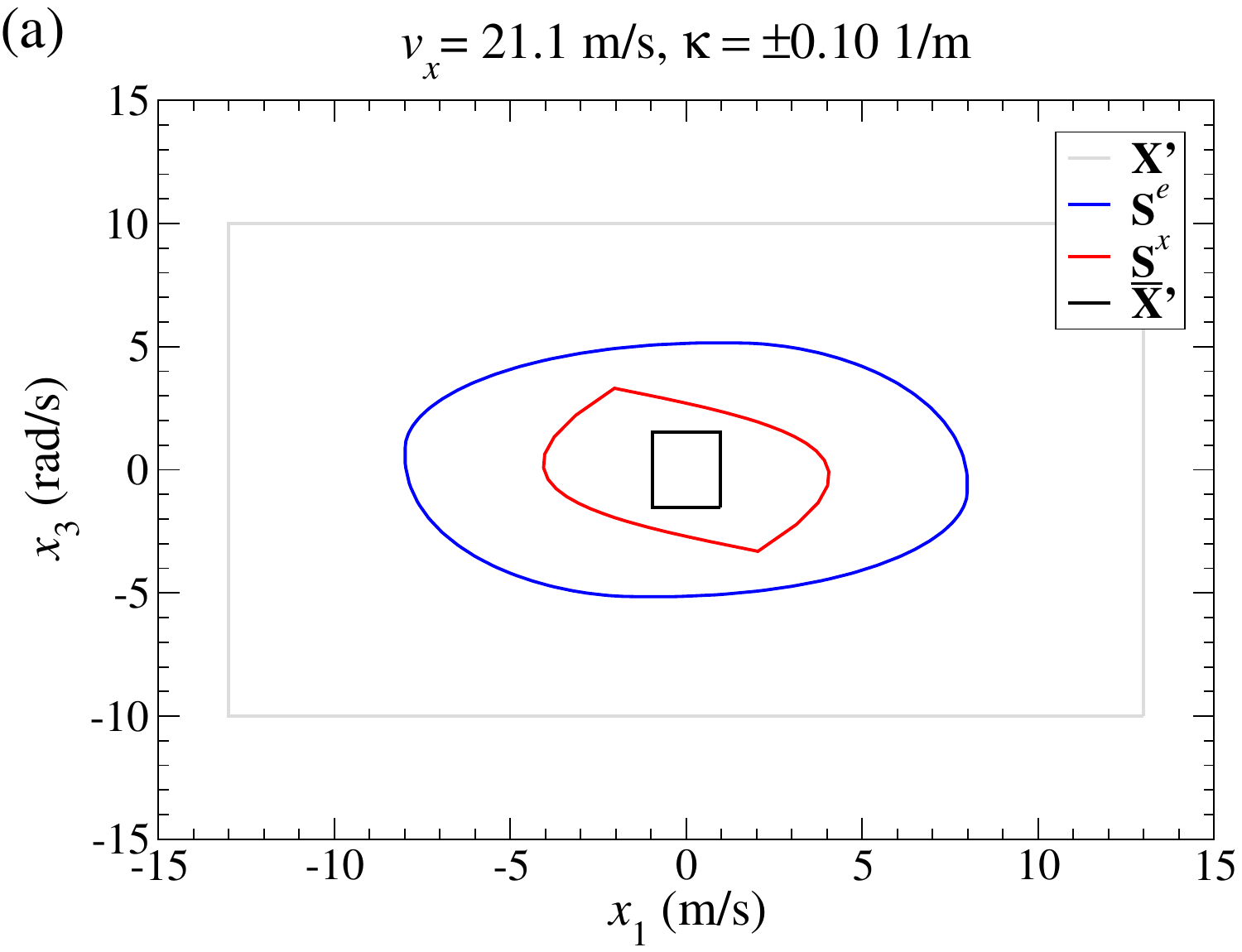}}%
{\includegraphics*[width=0.25\linewidth]{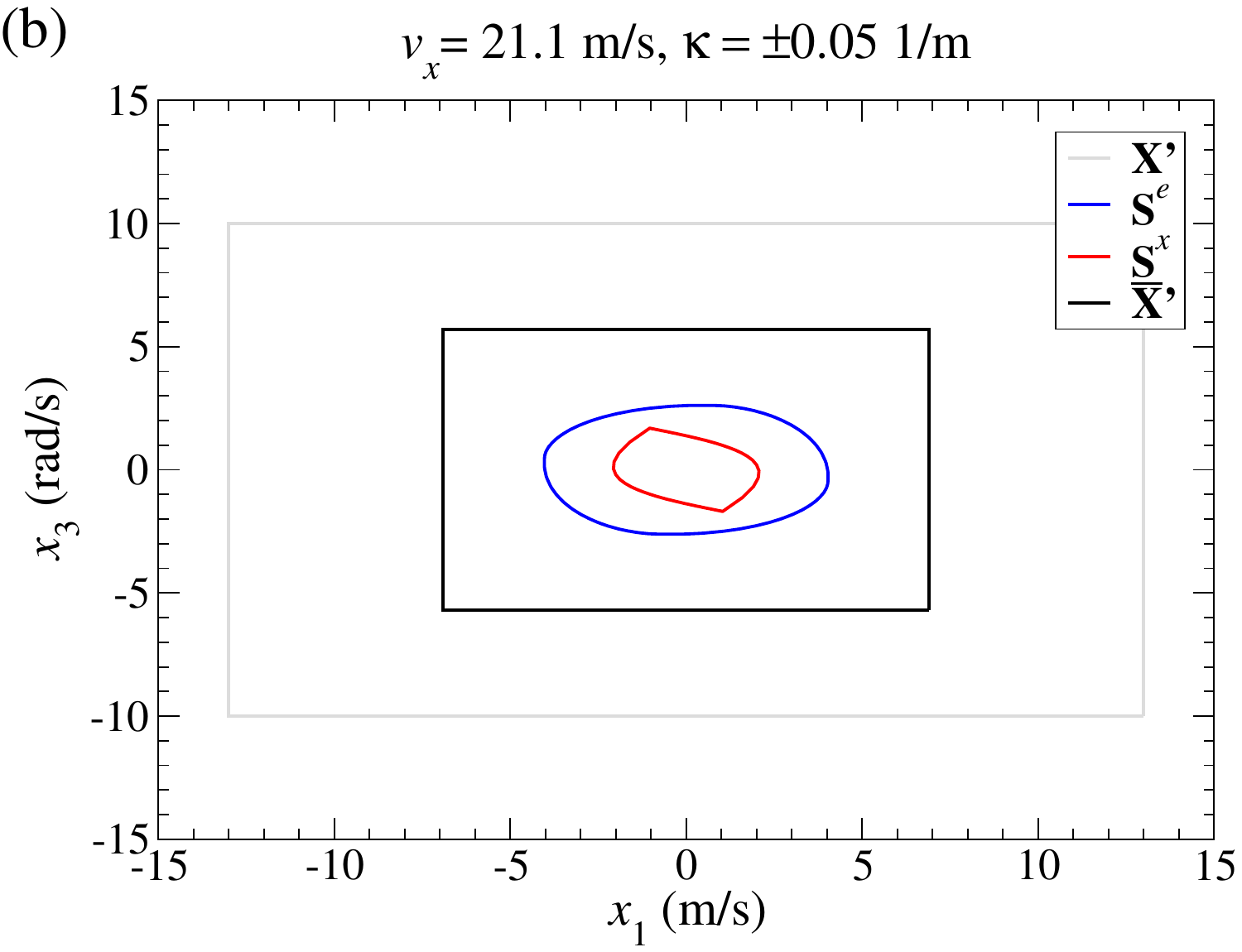}}%
{\includegraphics*[width=0.25\linewidth]{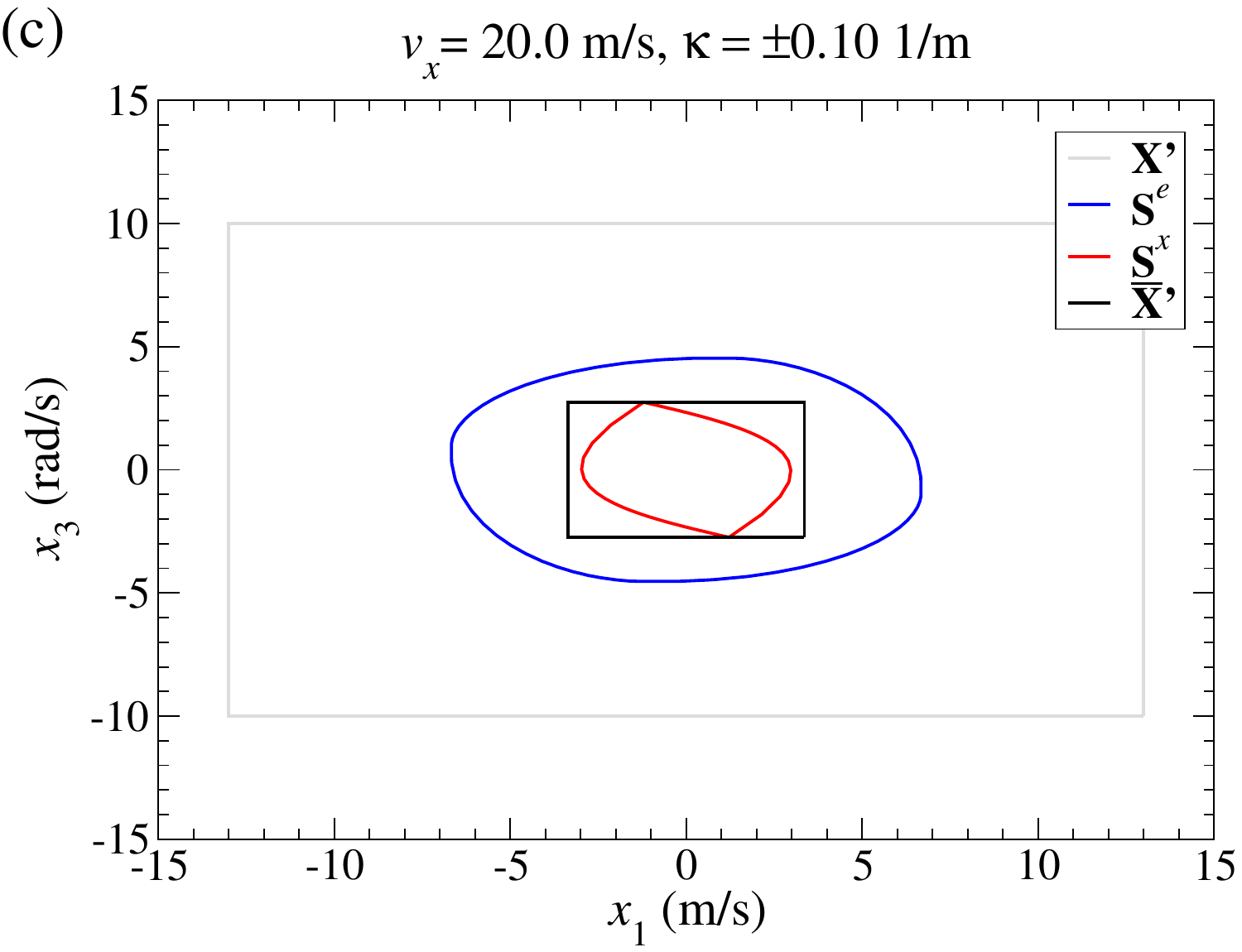}}%
{\includegraphics*[width=0.25\linewidth]{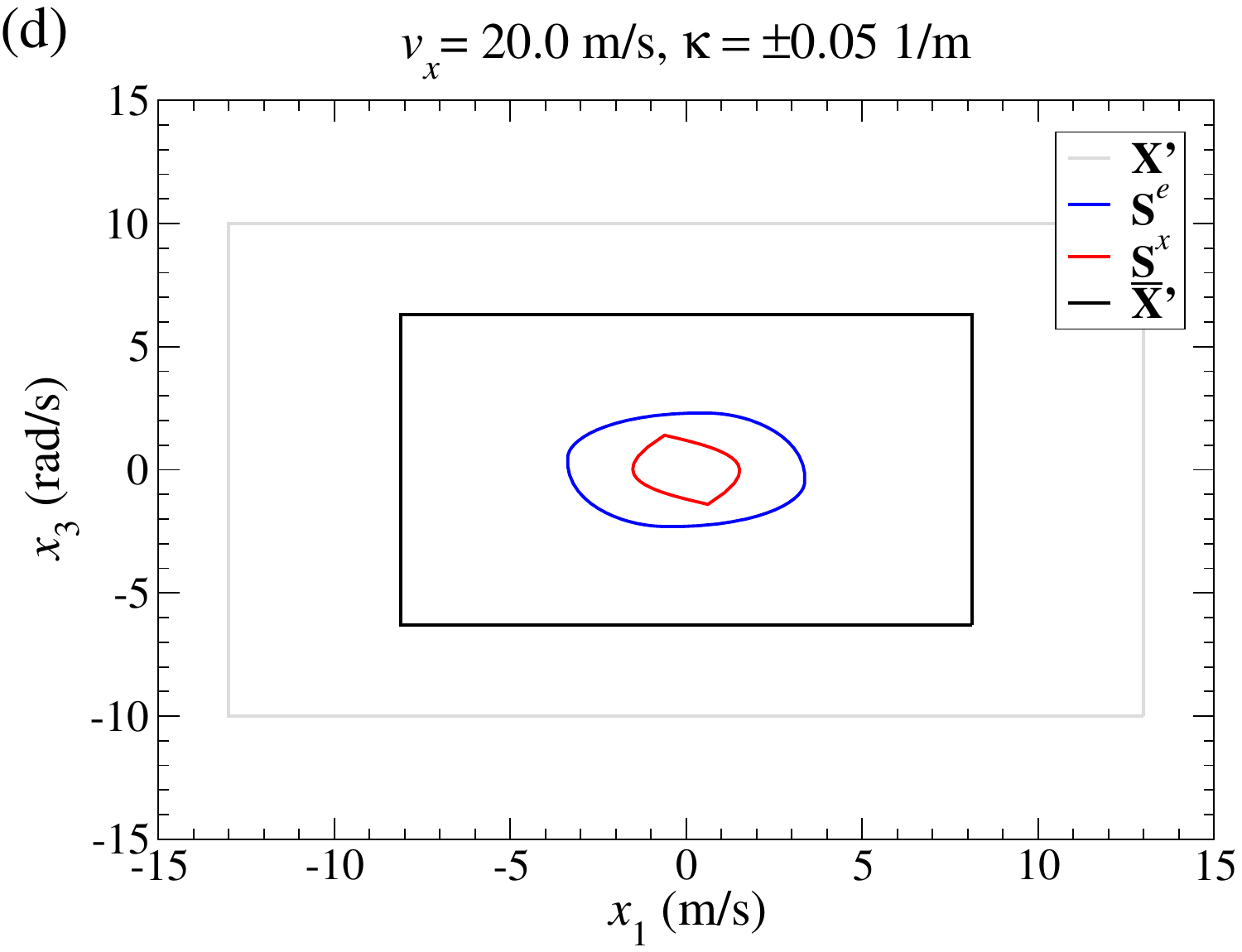}}%
\caption{Outer boundaries of sets $\mathbb{X}'$, $\mathbb{\bar X}'$, $\mathbb{S}^e$,  and $\mathbb{S}^x$ when $v_x$ = 21.1 or 20.0 m/s and $\kappa$ = $\pm$0.1 or $\pm$0.05 1/m. Here, $x_1  \equiv \dot \Delta $ and $x_3  \equiv \dot \theta $.}
\end{figure*}

\begin{figure*}[!t]
{\includegraphics*[width=0.25\linewidth]{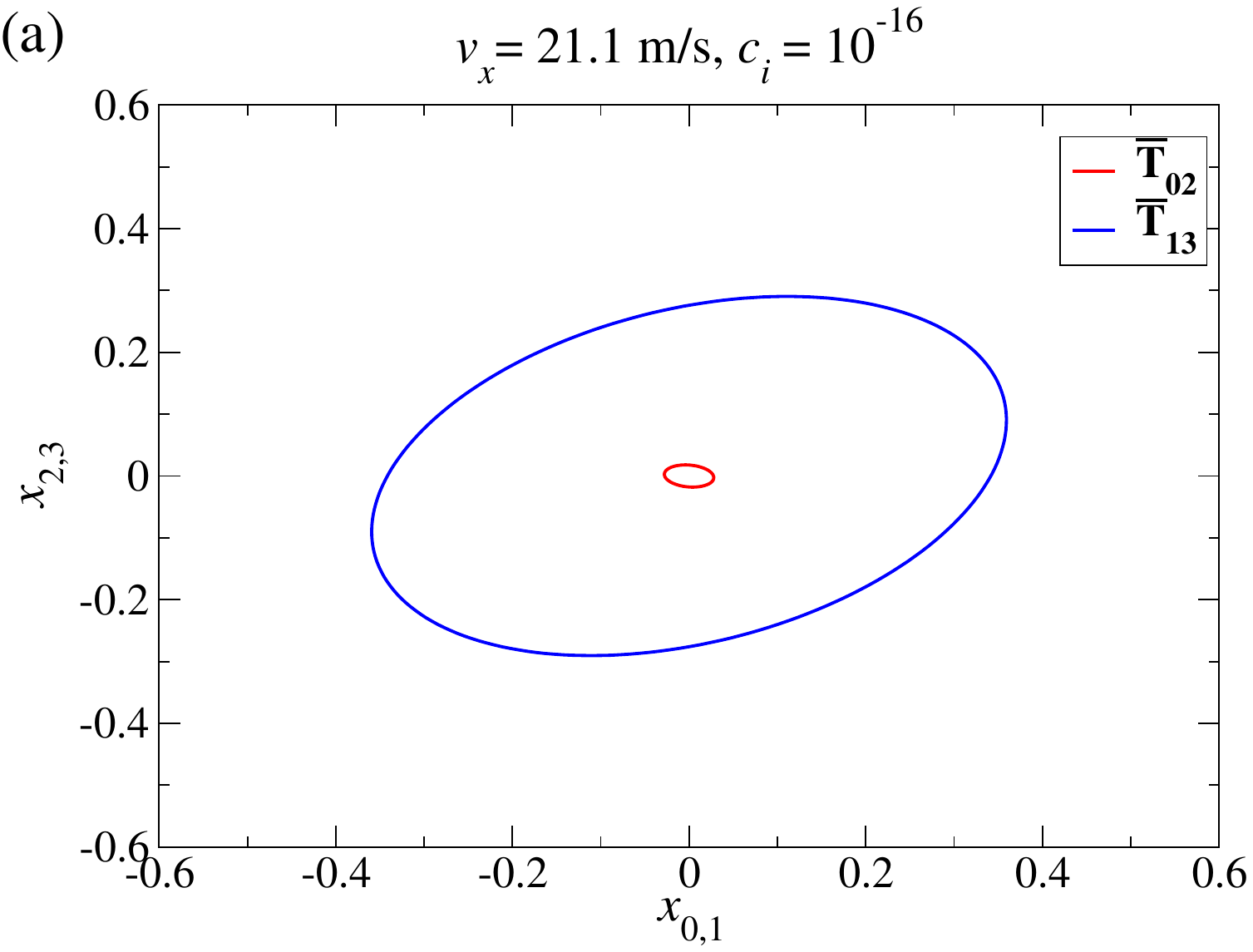}}%
{\includegraphics*[width=0.25\linewidth]{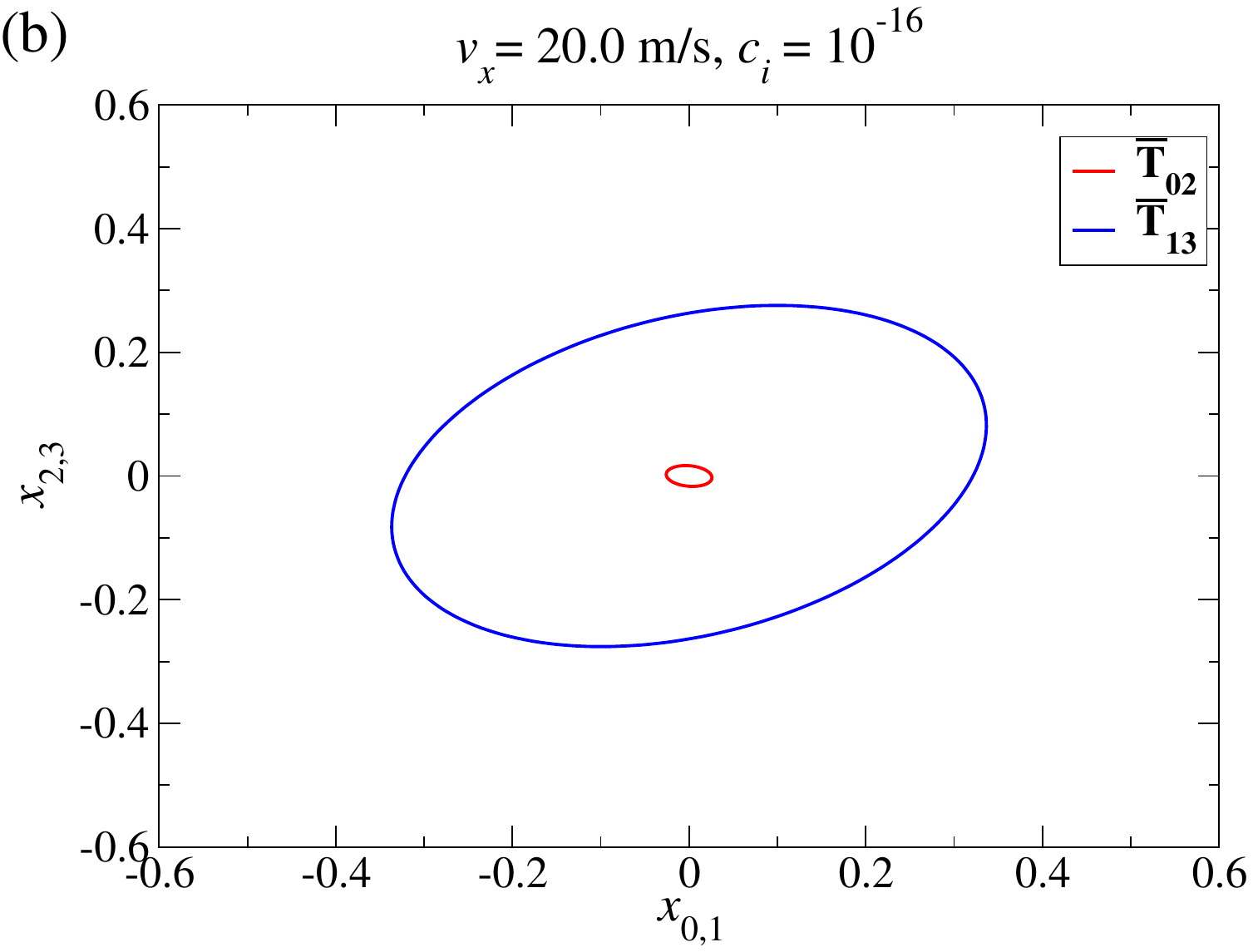}}%
{\includegraphics*[width=0.25\linewidth]{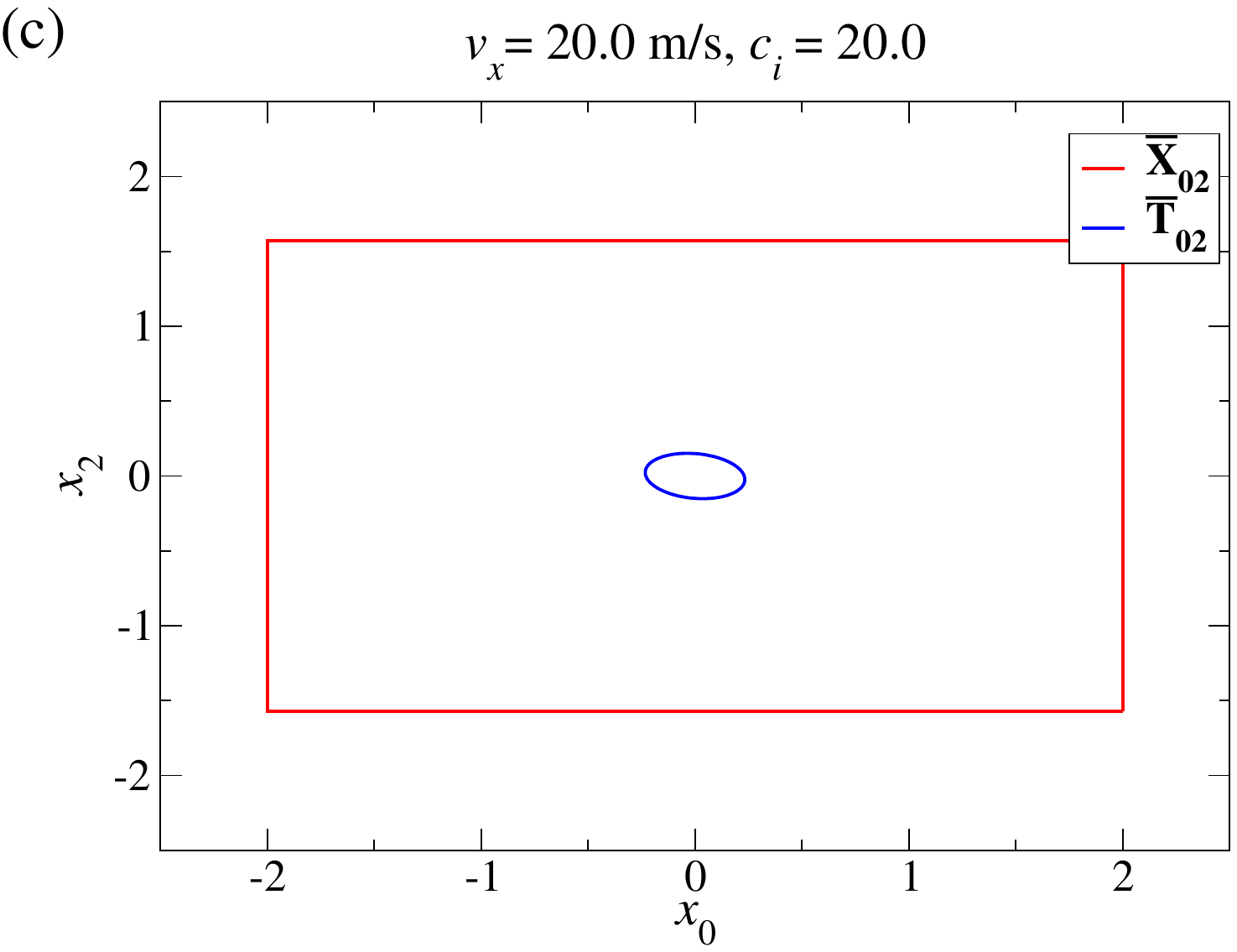}}%
{\includegraphics*[width=0.25\linewidth]{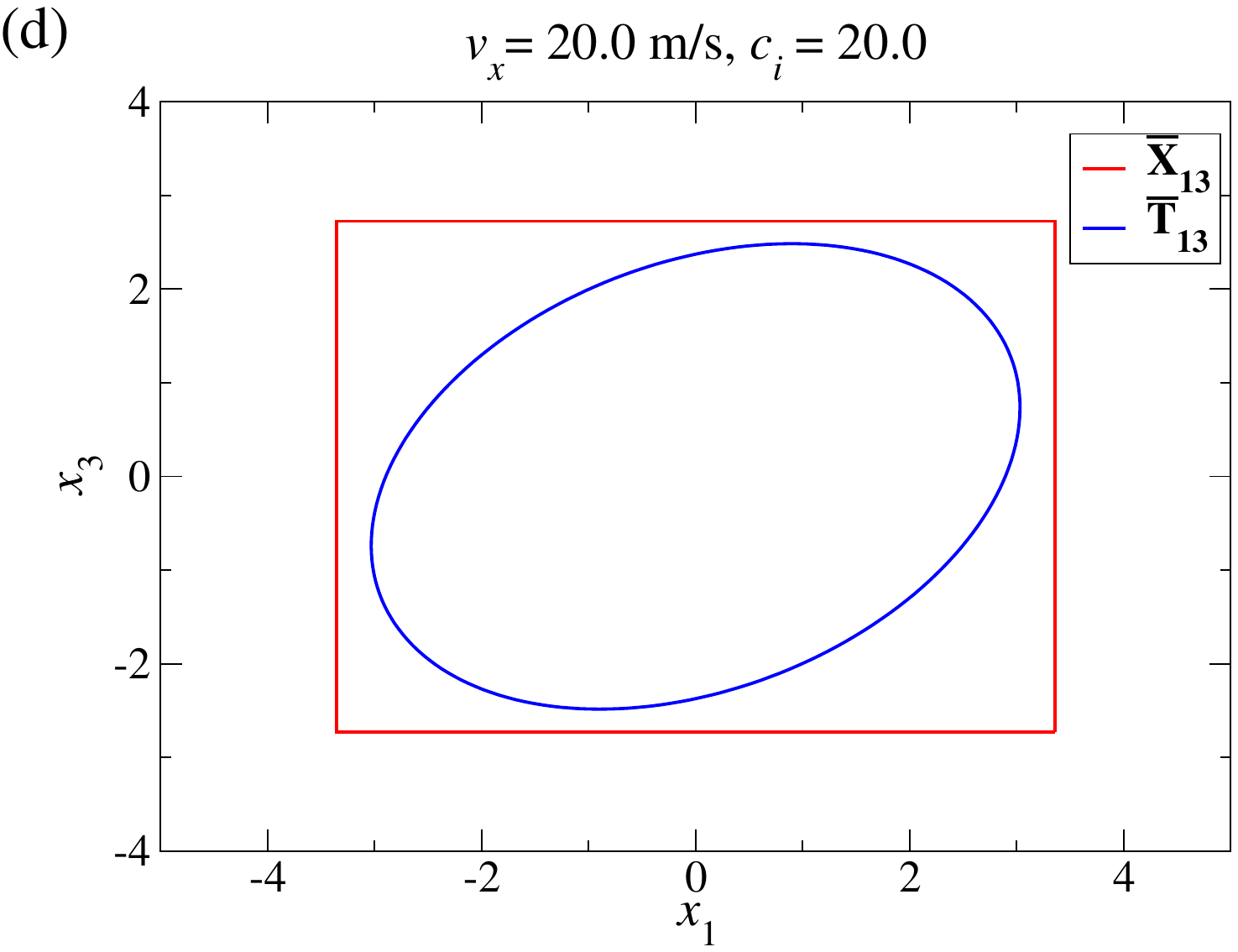}}%
\caption{Outer boundaries of  sets $\mathbb{\bar T}_{02}$ and $\mathbb{\bar T}_{13}$ when $v_x$ = 20.0 or 21.1 m/s and $c_i$ = $10^{-16}$ or 20.0. Here, $x_0  \equiv  \Delta $, $x_1  \equiv \dot \Delta $, $x_2  \equiv \theta $, and $x_3  \equiv \dot \theta $.}
\end{figure*}

The relevant nominal state and observer dynamics are expressed as follows, respectively:
\begin{subequations}
\begin{equation}
{\bf \bar x'}_{i + 1}  = {\bf A' \bar x'}_i  + {\bf B' \bar u'}_i  ,
\end{equation}
\begin{equation}
{\bf \hat x'}_{i + 1}  = {\bf A'\hat x'}_i  + {\bf B'u'}_i  + {\bf L'}\left( {{\bf \hat y'}_i  - {\bf C'\hat x'}_i } \right).
\end{equation}
\end{subequations}
The actual system state ${\bf x'}$ can be represented using the nominal state ${\bf \bar x'} $,  estimation error ${\bf e'}$, and prediction error ${\bf \xi '}$ as follows:
\begin{equation}
{\bf x'}  = {\bf \bar x'}  + {\bf e'}  + {\bf \xi '}.
\end{equation}
The parameters  ${\bf e'}$ and ${\bf \xi '}$ are expressed as follows: 
\begin{subequations}
\begin{equation}
{\bf e'}  = {\bf x'}  - {\bf \hat x'},
\end{equation}
\begin{equation}
{\bf \xi '}  = {\bf \hat x'}  - {\bf \bar x'}  .
\end{equation}
\end{subequations}
These parameters satisfy the following uncertain difference equations:
\begin{subequations}
\begin{equation}
{\bf e'}_{i{\rm  + }1}  = \left( {{\bf A'} - {\bf L'C'}} \right){\bf e'}_i  + {\bf v}^{x'}  - {\bf L'v}^{y'} ,
\end{equation}
\begin{equation}
{\bf \xi '}_{i + 1}  = \left( {{\bf A'} + {\bf B'K'}} \right){\bf \xi '}_i  + {\bf L'}\left( {{\bf C'e'} + {\bf v}^{y'} } \right).
\end{equation}
\end{subequations}
The robust positive-invariant sets $\mathbb{S}^e$ and $\mathbb{S}^x $ satisfy the following expressions, respectively:
\begin{subequations}
\begin{equation}
  \left( {{\bf A'} - {\bf L'C'}} \right)\mathbb{S}^e  \oplus \mathbb{V}^{x'} \ominus {\bf L'}\mathbb{V}^{y'}  \subseteq \mathbb{S}^e,
\end{equation}
\begin{equation}
 \left( {{\bf A'} + {\bf B'K'}} \right)\mathbb{S}^x   \oplus  {\bf L'}\left( {\bf C'}\mathbb{S}^e   \oplus \mathbb{V}^{y'}  \right) \subseteq \mathbb{S}^x . 
\end{equation}
\end{subequations}
The nominal state ${\bf \bar x'}$ and  control input ${\bf \bar u'}$ are then subject to the tighter constraints $\mathbb{ \bar X'}$ and $\mathbb{\bar U'}$, which are defined as follows, respectively:
\begin{subequations}
\begin{equation}
\mathbb{\bar X'} =  \mathbb{ X'} \ominus \left( {\mathbb{S}^e  \oplus \mathbb{S}^x } \right) ,
\end{equation}
\begin{equation}
\mathbb{\bar U'} = \mathbb{U'}  \ominus {\bf K'}\mathbb{S}^x .
\end{equation}
\end{subequations}

Fig. 2 presents the results obtained using the aforementioned equations for sets $\mathbb{X}'$, $\mathbb{\bar X}'$, $\mathbb{S}^e$,  and $\mathbb{S}^x$ when $v_x$ = 21.1 or 20.0 m/s and $\kappa $ = $\pm$0.05 or $\pm$0.1 1/m. In this figure, the size of the nominal constraint region $\mathbb{\bar X}'$ increases when that of the $\mathbb{S}^e \oplus \mathbb{S}^x$ region decreases. Consequently, the largest set of feasible nominal states is obtained when $v_x$ = 20.0 m/s and $\kappa $ = $\pm$0.05 1/m [Fig. 2(d)]. To reduce the online computation burden, the computation expressed in (21), which is based on tightened constraints, can be performed offline, and a lookup table can be established for online use. In the proposed approach \cite{Lee26}, the tightened constraints are  adaptively estimated online on the basis of the road curvature $\kappa $ detected online by a vision-based perception system. Compared with the traditional tube-based approach, in which tightened constraints are derived from the worst-case disturbance, the approach proposed in \cite{Lee26} can reduce  system conservativeness and enhance controller performance through the adaptive use of tightened constraints.

The OF-MPC formulation presented in the previous section can be extended to incorporate robustness, thereby developing the proposed OF-MPC\ framework. Thus, the target calculation problem is reformulated as Problem P1' = P1(${\bf \hat d}^ +$, ${\bf r}_{sp}$, $\mathbb{\bar X}$/$\mathbb{\bar U}$), where $\mathbb{\bar U} \equiv \mathbb{\bar U'}$. The second and fourth components of $\mathbb{\bar X}$ are replaced with the corresponding components from  $ \mathbb{\bar X'}$, whereas the original constraints are retained for the other components. Solving Problem P1' yields ${\bf \bar x}_s$ and ${\bf \bar u}_s$, which are robust target state and control vectors, respectively. The regulation problem is then reformulated as Problem P2' = P2(${\bf \hat x}^ +$, ${\bf \hat d}^ +$, ${\bf \bar x}_s$/${\bf \bar u}_s$, $\mathbb{\bar X}$/$\mathbb{\bar T}$/$\mathbb{\bar U}$), where $\mathbb{\bar T}$ and $\mathbb{\bar T'}$ have a similar relationship to that between  $\mathbb{\bar X}$ and $\mathbb{\bar X'}$. The construction of set $\mathbb{\bar T}$ is described as follows.

Herein, a positively invariant ellipsoidal level set  \cite{Raw26, Raw17} is used as the terminal set because of its relatively low computational complexity and suitability for high-dimensional systems \cite{May13}. Therefore, the terminal set is represented as follows: 
\begin{equation}
\mathbb{\bar T }_i:= \left\{ {\left. {\bf \bar x} \right|\left( {\bf \bar x} - {\bf \bar x_s} \right)^T {\bf P}\left( {{\bf \bar x} - {\bf \bar x_s} } \right) \le  c _i} \right\}.
\end{equation}
The ellipsoid scaling constraint $c_i$ is an offline design parameter that determines the size of the constraint region and can be selected to be close to machine precision \cite{Raw26}. Because ${\bf \bar x_s}$ varies with ${\bf \hat d}$, $\mathbb{\bar T}_i$ also varies with  ${\bf \hat d}$. Such variations in $\mathbb{\bar T}_i$ may violate the recursive feasibility of Problem P2' \cite{Raw17}. To address this limitation, an enlarged ellipsoidal set  $\mathbb{\bar T }$ is adopted as the terminal set, with each $\mathbb{\bar T }_i$ set incorporated into $\mathbb{\bar T }$. The relevant computation is described as follows.

To analyze the influence of variations in the estimated disturbance ${\bf \delta \hat d}$ on the steady-state target within a single MPC step, the mapping ${\bf \delta \hat d} \mapsto {\bf \delta \bar z}_s $, where ${\bf \bar z}_s  \equiv \left[ {{\bf \bar x}^T_s ,{\bf \bar u}^T_s } \right]^T$, is established on the basis of the following relationship:
\begin{equation}
{\bf \delta \bar z}_s  = {\bf \Pi}  {\bf \delta \hat d},
\end{equation}
where $\bf \Pi$ is the steady-state mapping matrix. This matrix is expressed as follows:
\begin{equation}
{\bf \Pi}  = \left[ {\begin{array}{*{20}c}
   {{\bf I} - {\bf A}} & { - {\bf B}}  \\
   {\bf HC} & {\bf 0}  \\
\end{array}} \right]^{ - 1}\left[ {\begin{array}{*{20}c}
   {{\bf B}_d }  \\
   { {\bf r}_{sp} - {\bf HC}_d }  \\
\end{array}} \right].
\end{equation}
Here, the mapping is assumed to be uniformly continuous. Thus, ${\bf \delta \bar z}_s$ is bounded by the Lipschitz constant $\left\| \Pi  \right\|_2$ as follows:
\begin{equation}
\left\| {{\bf \delta \bar z}_s } \right\| = \left\| {\Pi {\bf \delta \hat d}} \right\| \le \left\| \Pi  \right\|_2 \left\| {{\bf \delta \hat d}} \right\| \le \left\| \Pi  \right\|_2 \left\| {{\bf \delta \hat d}} \right\|_{\max } ,
\end{equation}
where
\begin{equation}
\left\| {{\bf \delta \hat d}} \right\|_{\max }  = \sqrt {\left( {2 \hat d_{0,\max } } \right)^2  + \left( {2 \hat d_{1,\max } } \right)^2 }  = 2\kappa _{\max } \sqrt {\gamma _0^2  + \gamma _1^2 } .
\end{equation}
Because terminal set $\mathbb{\bar T}$ is only related to the state variables, the varying state target $\left\| {{\bf \delta \bar x}_s } \right\|$ can be computed using ${\bf \Pi} _x$, which is a submatrix of ${\bf \Pi} $ comprising the rows about ${{\bf  \bar x}_s }$, as follows: 
\begin{equation}
\left\| {{\bf \delta \bar x}_s } \right\| \le \left\| \Pi _x \right\|_2 \left\| {{\bf \delta \hat d}} \right\|_{\max } \equiv \left\| {\bf \delta \bar x}_s \right\|_{\max }.
\end{equation}

The enlarged ellipsoidal set $\mathbb{\bar T }$ can now be defined as follows:
\begin{equation}
\mathbb{\bar T }:= \left\{ {\left. {\bf \bar x} \right| {\bf \bar x} ^T {\bf P} {{\bf \bar x} } \le  c _f} \right\},
\end{equation}
where the scaling parameter $c_f$ is obtained under the maximum disturbance variation. This parameter is expressed as follows:
\begin{equation}
c_f  = \left( {\sqrt {c_i }  + \left\| {\bf \delta \bar x}_s \right\|_{\max } } \right)^2 .
\end{equation}
According to this equation, the set inclusion property $\mathbb{\bar T }({\bf \bar x}_s ({\bf \hat d})) \subseteq \mathbb{\bar T }({\bf \bar x}_s ({\bf \hat d} + {\bf \delta \hat d}))$ $\forall {\bf \hat d} \in \mathbb{W}$ is ensured, which guarantees the recursive feasibility  of the MPC problem in the OF-MPC framework. The proof of the closed-loop properties of the MPC problem in the proposed OF-MPC framework is provided in the Appendix.

\begin{figure*}[!t]
{\includegraphics*[width=0.33\linewidth]{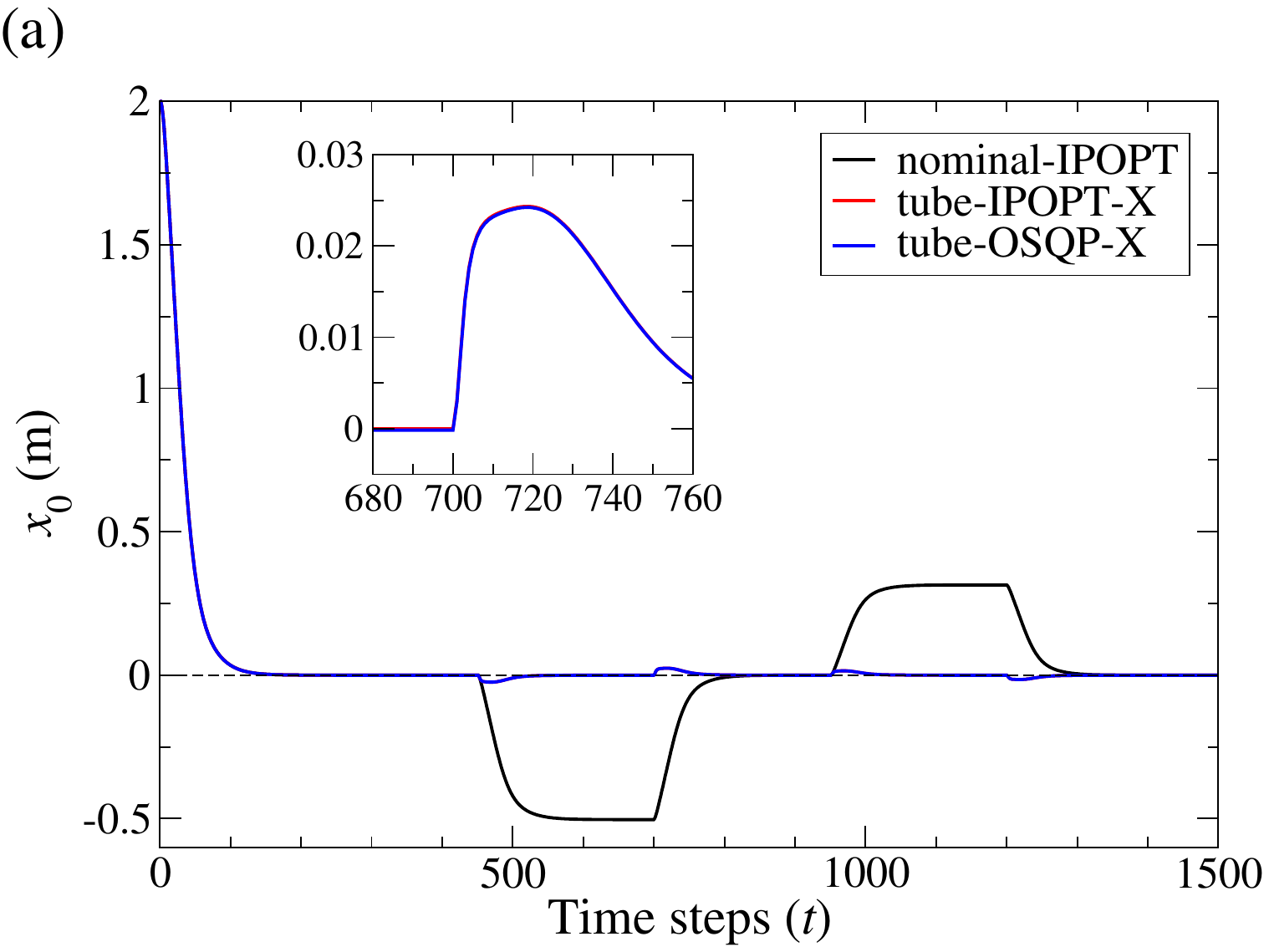}}%
{\includegraphics*[width=0.33\linewidth]{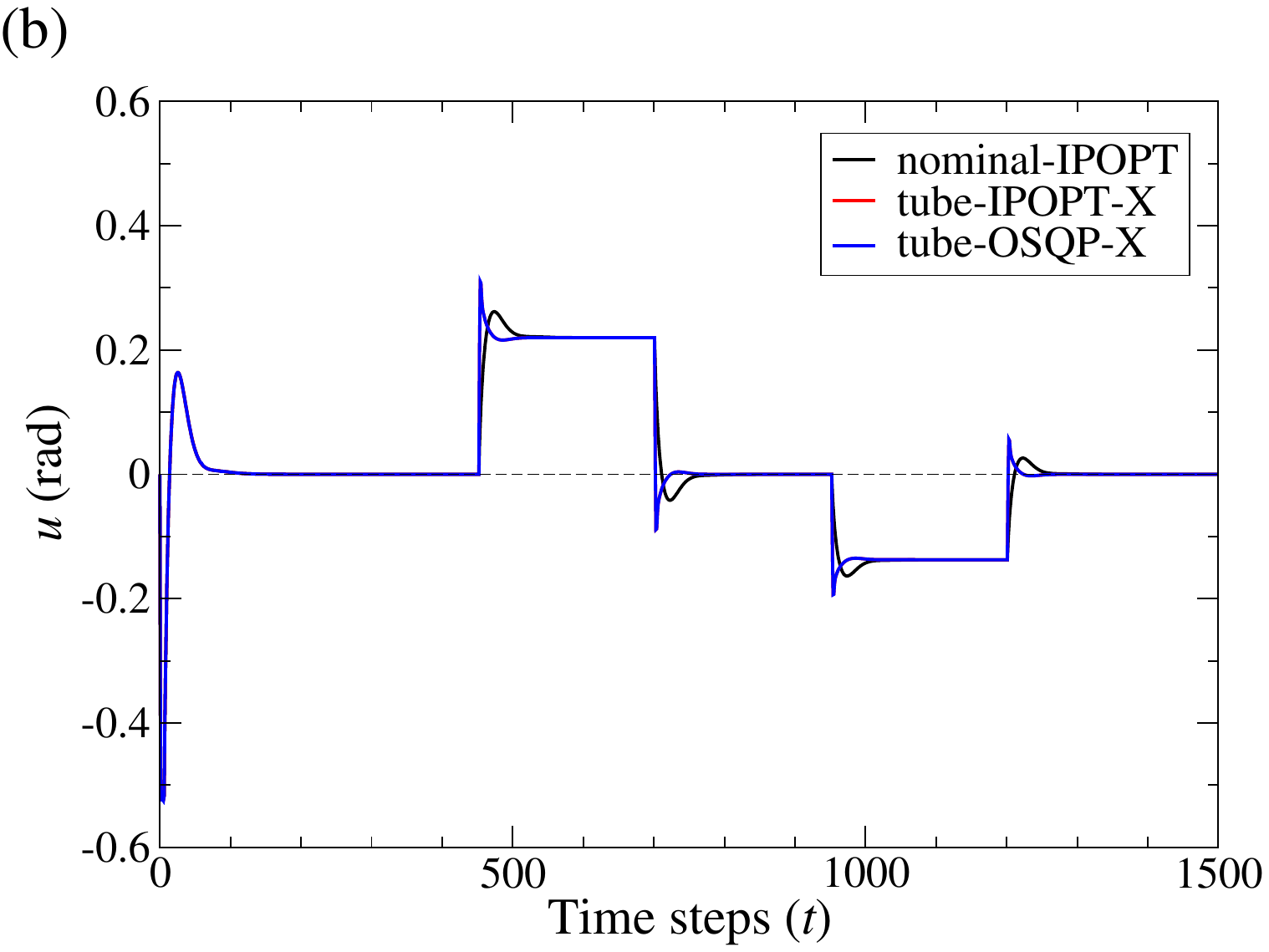}}%
{\includegraphics*[width=0.33\linewidth]{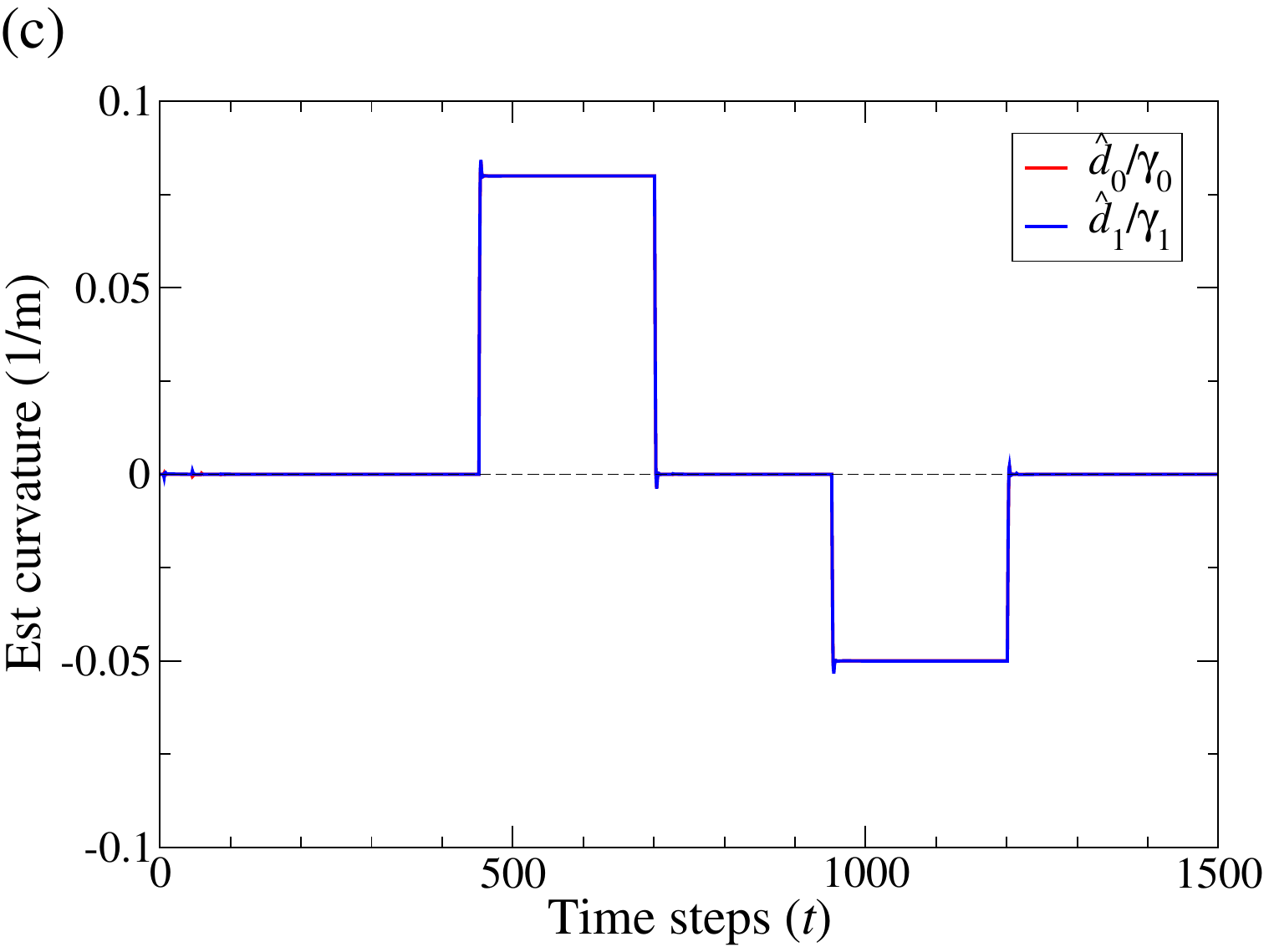}}%
\caption{Numerical simulation results obtained with different IPOPT- and OSQP-based solvers under tightened constraints  when $v_x$ = 20.0 m/s, $\left| {x_1 } \right| \le$ 13.0 m/s, and $\left| {x_3 } \right| \le$ 10.0 rad/s: (a) trajectory of the actual system state $x_0$ ($\Delta$); (b) trajectory of the system control input $u$ = $\delta$; and (c) trajectories of the estimated road curvatures obtained with the tube-OSQP-X algorithm, which were obtained by dividing the estimated disturbance vectors ${ \hat d}_0$ and ${ \hat d}_1$ by $\gamma _0$ and $\gamma _1$, respectively.} 
\end{figure*}

\begin{figure*}[!t]
{\includegraphics*[width=0.33\linewidth]{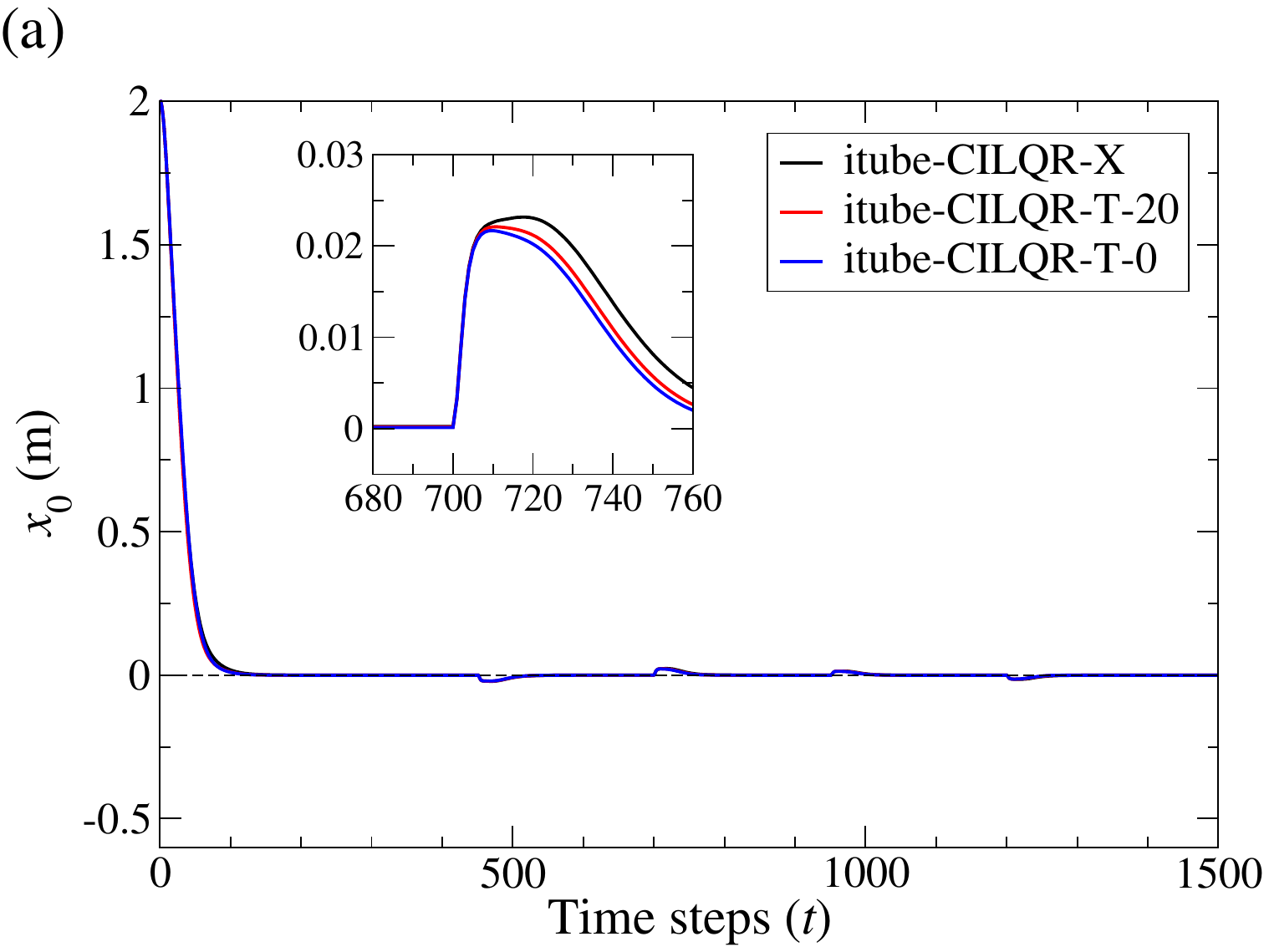}}%
{\includegraphics*[width=0.33\linewidth]{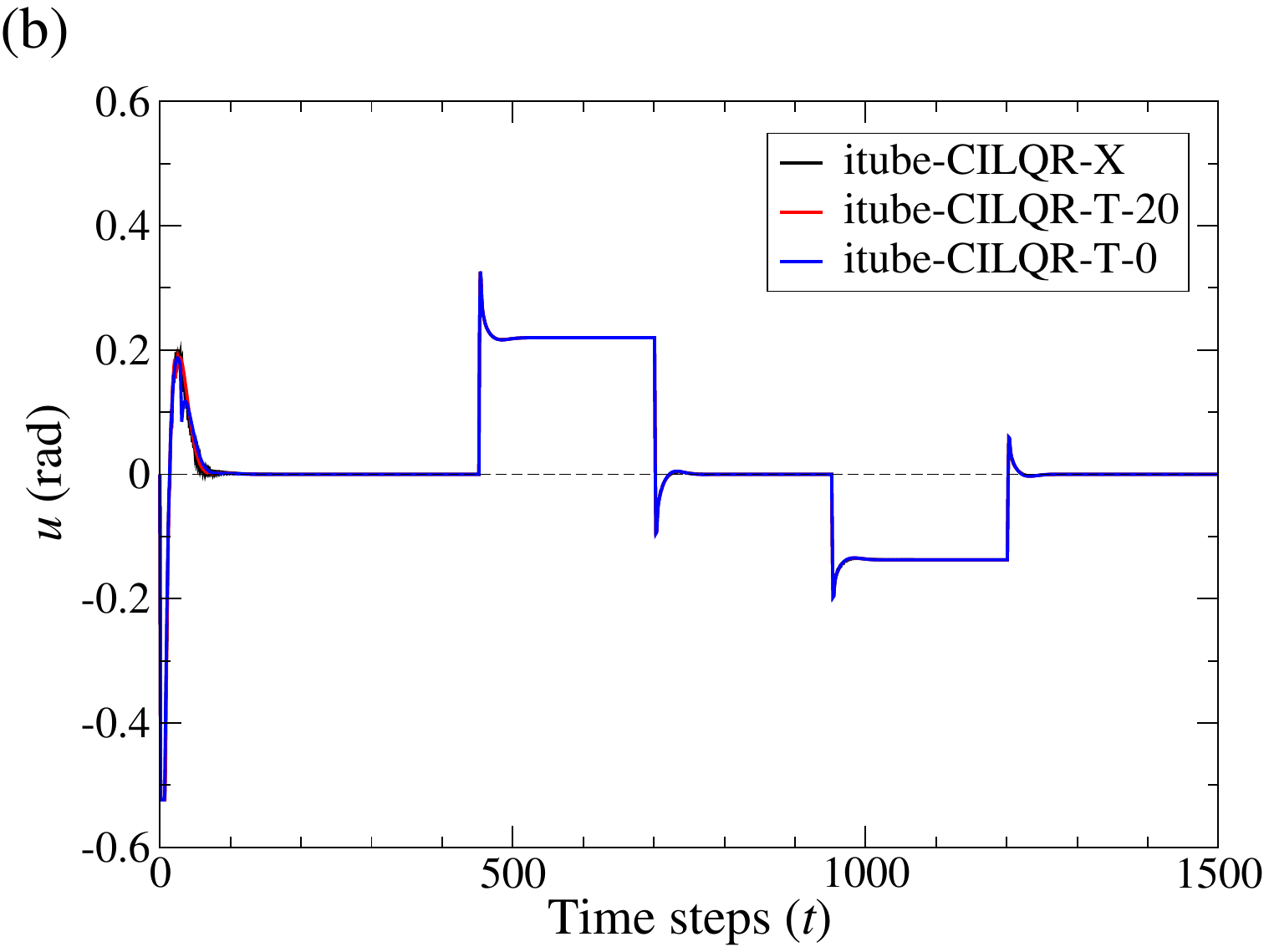}}%
{\includegraphics*[width=0.33\linewidth]{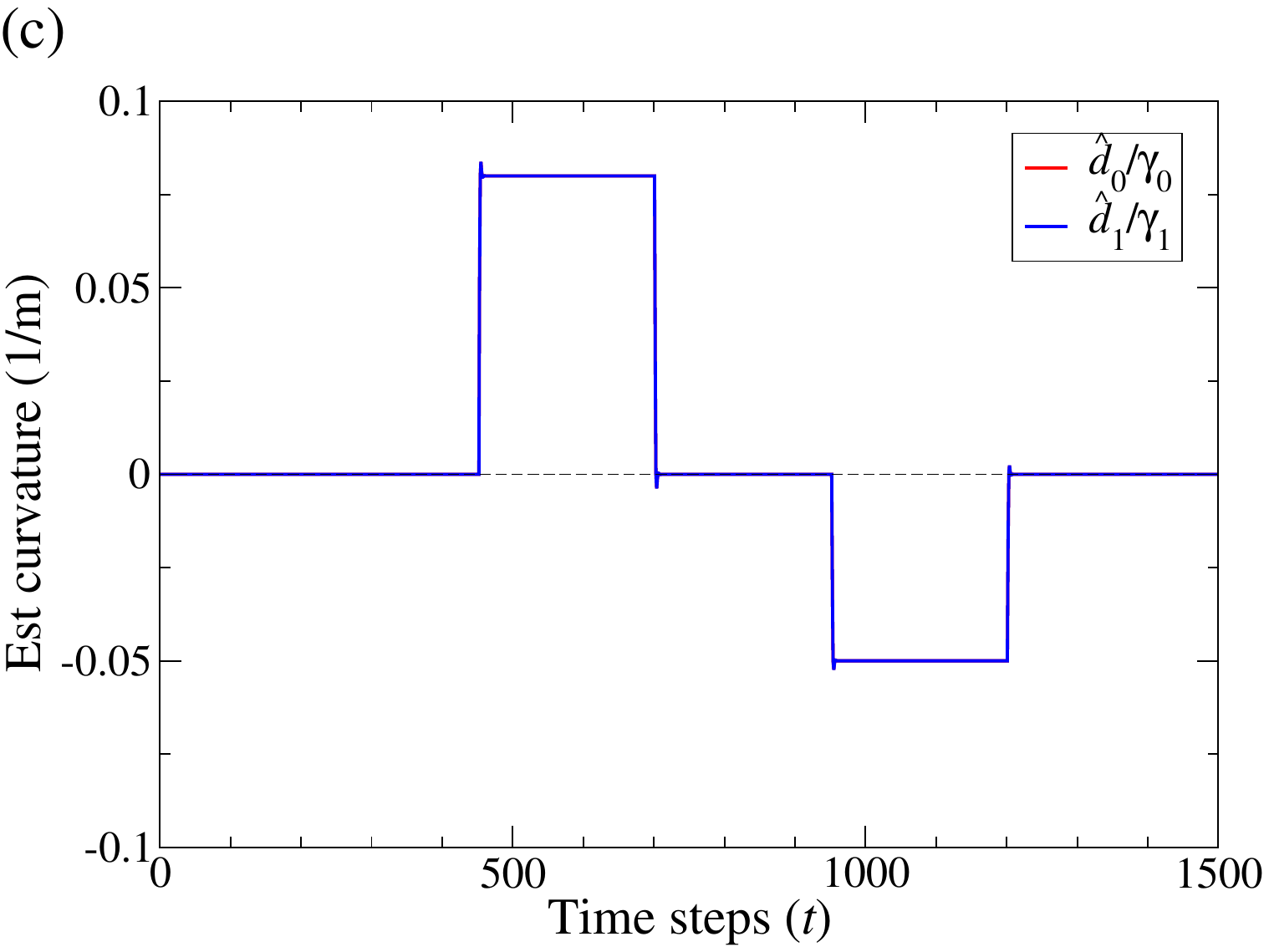}}%
\caption{Numerical simulation results obtained with different itube-CILQR-based solvers when $v_x$ = 20.0 m/s, $\left| {x_1 } \right| \le$ 13.0 m/s, and $\left| {x_3 } \right| \le$ 10.0 rad/s: (a) trajectory of the actual system state $x_0$ ($\Delta$); (b) trajectory of the system control input $u$ = $\delta$; and (c) trajectories of the estimated road curvatures  obtained with the itube-CILQR-T-0 algorithm, which were obtained by dividing the estimated disturbance vectors ${ \hat d}_0$ and ${ \hat d}_1$ by $\gamma _0$ and $\gamma _1$, respectively.} 
\end{figure*}

Fig. 3 displays the results computed for the enlarged ellipsoidal set when $v_x$ = 21.1 or 20.0 m/s and $c_i$ = ${10}^{-16}$ or 20.0. In this figure, $\mathbb{\bar T}_{02}$ and  $\mathbb{\bar T}_{13}$ represent components extracted from $\mathbb{\bar T}$ through projection. In general, all ellipsoidal terminal sets in Fig. 3 are contained within the corresponding state constraint regions (e.g., $\mathbb{\bar X}'$ in Fig. 2). The constraint region of $\mathbb{\bar T}_{13}$ (in blue) is larger than that of  $\mathbb{\bar T}_{02}$ (in red) because the influences of disturbance (${\bf \hat d}$) uncertainty on  $\dot \Delta$ and $\dot \theta$ are stronger for  $\mathbb{\bar T}_{13}$. Moreover, the constraint region for the high-speed case [Fig. 3(a)] is larger than that for the low-speed case [Fig. 3(b)] because the disturbance uncertainty is higher in the high-speed case. When $c_i$ is increased to 20.0, the outer boundary of $\mathbb{\bar T}_{13}$ approaches that of $\mathbb{\bar X}_{13}$ if $v_x$ = 20.0 m/s [Fig. 3(d)]. Under all conditions shown in Fig. 3, $\mathbb{\bar T}_{02}$ occupies a smaller region than does $\mathbb{\bar X}_{02}$, which may compromise the convergence of the MPC solver, as explained in Section V. In addition, the terminal ellipsoid constraints are nonlinear, which can result in a higher online computational burden than that observed with linear constraints. Thus, box constraints derived from the largest rectangular constraint regions of $\mathbb{\bar T}_{02}$ and $\mathbb{\bar T}_{13}$ are used in the online optimization \cite{Lee26}. The sizes of these rectangular regions are influenced by the predefined disturbance bounds ($\mathbb W$) and can thus be adjusted through appropriate parameter selection.

The proposed robust OF-MPC framework can be implemented using  standard software packages to solve the regulation problem. It can also employ the itube-CILQR algorithm to solve the corresponding tube-based MPC problem. This algorithm has a flexible architecture, thus facilitating the evaluation of different control formulations. The itube-CILQR solver is described in the following section.

\section{Itube-CILQR Algorithm}
This section describes the itube-CILQR algorithm, which was proposed in our previous study \cite{Lee26}  and improved in the present study. On the basis of the CILOR algorithm \cite{Che17, Che19, Liu24}, Problem P2' is converted into an unconstrained optimization problem (Problem P3) by using convex exponential barrier functions. Furthermore, additional interpolation variables are introduced into the itube-CILQR algorithm to adaptively adjust the size of the tube cross section according to the detectable disturbance  $\kappa $. By incorporating the aforementioned adaptive mechanism, the improved itube-CILQR algorithm achieves lower conservatism and better control performance compared with classical tube-based algorithms at the cost of an acceptable increase in computation time \cite{Lee26}. Problem P3, which incorporates the interpolation variable sequence ${\bf L } \equiv \left\{ {{\bf \Lambda }_0 ,{\bf \Lambda }_1 ,...,{\bf \Lambda }_N } \right\}$ and the interpolation variable  vector ${\bf \Lambda } \equiv \left[ {\begin{array}{*{20}c}
   {\underline {\lambda} } & {\lambda } & {\overline {\lambda} }  \\
\end{array}} \right]^T $,  is expressed as follows:

\begin{figure*}[!t]
{\includegraphics*[width=0.33\linewidth]{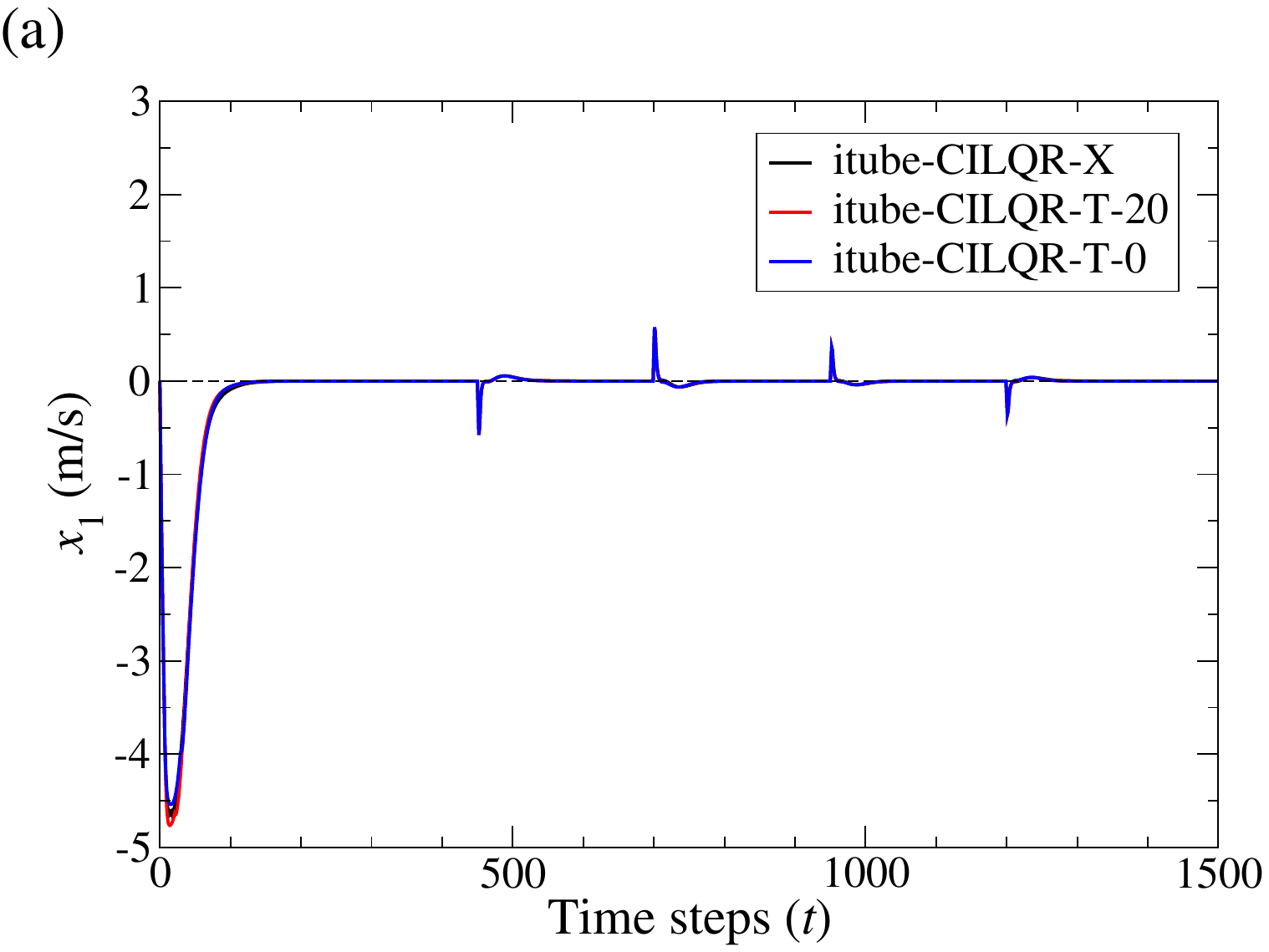}}%
{\includegraphics*[width=0.33\linewidth]{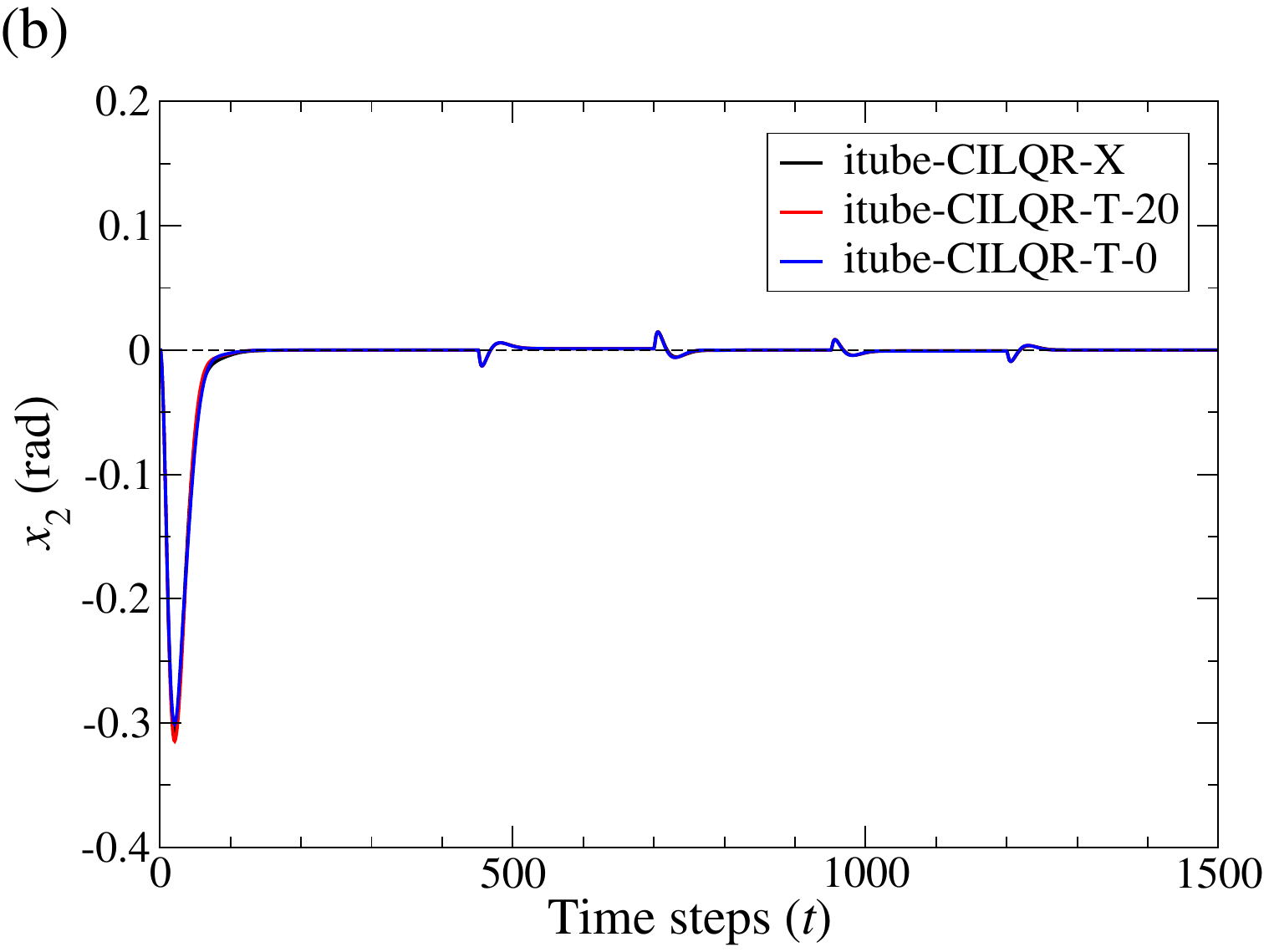}}%
{\includegraphics*[width=0.33\linewidth]{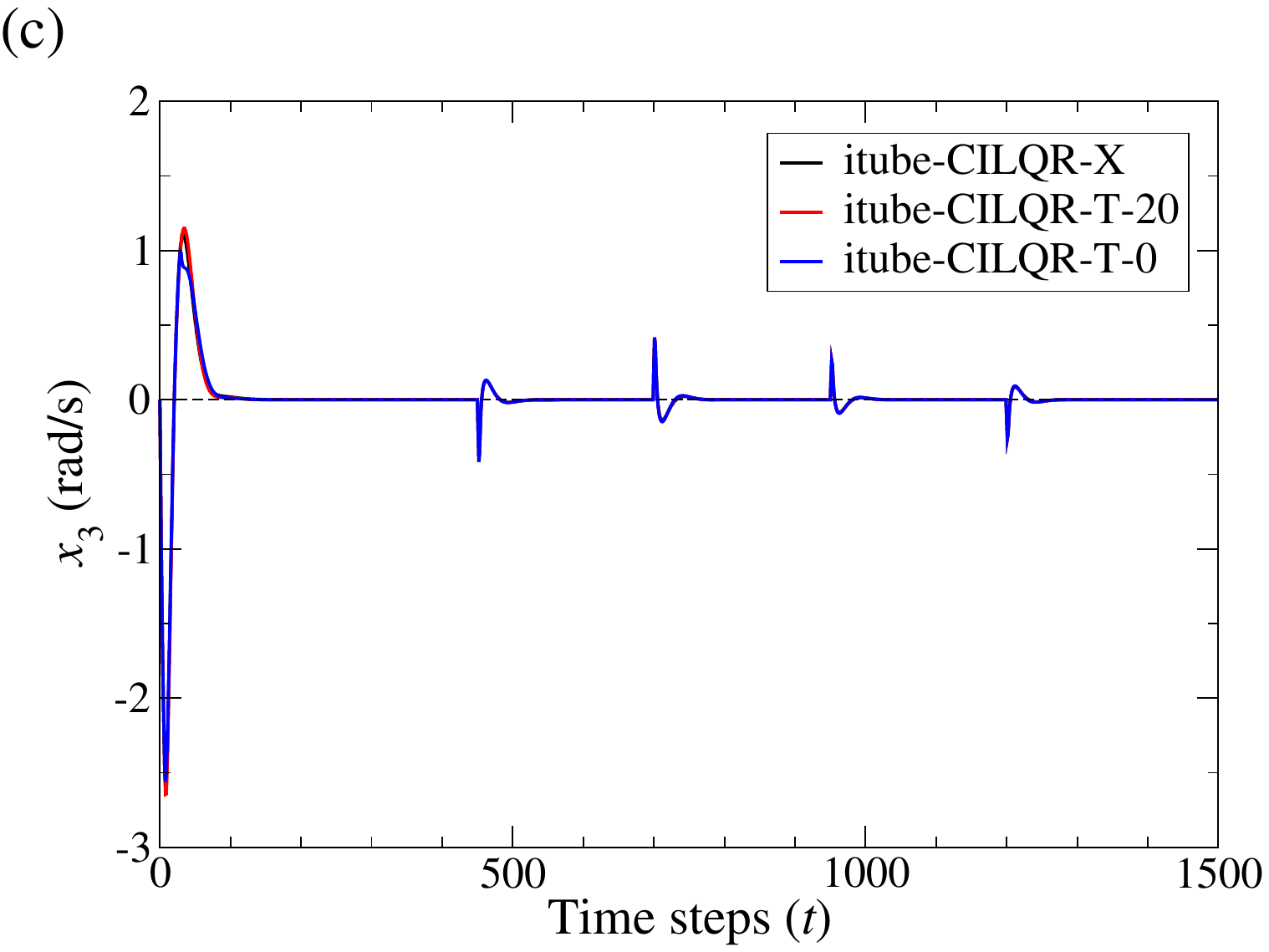}}%
\caption{Numerical simulation results obtained with the itube-CILQR solver when $v_x$ = 20.0 m/s, $\left| {x_1 } \right| \le$ 13.0 m/s, and $\left| {x_3 } \right| \le$ 10.0 rad/s: trajectories of the actual system states (a) $x_1$, (b) $x_2$, and (c) $x_3$. } 
\end{figure*}

\begin{table*}[!t]
\caption{Maximum Lateral Offsets and Average Computation Times Achieved by the tube-IPOPT-X, tube-OSQP-X, itube-CILQR-X, itube-CILQR-T-20, and itube-CILQR-T-0 Algorithms$^{a}$}
\begin{center}
\begin{tabular}{cccccc}\Xhline{1.5pt} 
Method &tube-IPOPT-X$^{b}$ &tube-OSQP-X$^{b}$ & itube-CILQR-X &  itube-CILQR-T-20&  itube-CILQR-T-0\\ \hline
Max. $\Delta$ (m)&0.024333 & 0.024268   & 0.023162  & 0.022097 & 0.021675\\  
Latency (ms)&6.45 & 0.84  & 1.10  & 4.21 &  7.35 \\  \Xhline{1.5pt} 
\multicolumn{6}{l}{$^{a}$\scriptsize{All algorithms were implemented in C++.}} \\
\multicolumn{6}{l}{$^{b}$\scriptsize{The tube-IPOPT-T-0/20 and tube-OSQP-T-0/20 algorithms could not achieve convergence  for the considered optimization problem.}} 
\\
\end{tabular}
\end{center}
\end{table*}

Problem P3:
\begin{subequations}
\begin{equation}
\mathop {\min }\limits_{{\bf \bar X},{\bf \bar U},{\bf L}}   \bar{J}_s  + \bar{J}_t  + J_l + J_{x0}+ J_{x1}+ J_{u}+ J_{lu}+ J_{ls},
\end{equation}
\begin{equation}
\begin{split}
\bar{J}_s  &= \sum\limits_{i = 0}^{N - 1} { \left( {{\bf \bar x}_i  - {\bf \bar x}_s } \right)^T {\bf Q}\left( {{\bf \bar x}_i  - {\bf \bar x}_s } \right) }\\
&+ \sum\limits_{i = 0}^{N - 1} { \left( {{\bf \bar u}_i  - {\bf \bar u}_s } \right)^T {\bf R}\left( {{\bf \bar u}_i  - {\bf \bar u}_s } \right) },
\end{split}
\end{equation}
\begin{equation}
\bar{J}_t  = \left( {{\bf \bar x}_N  - {\bf \bar x}_s } \right)^T {\bf P }\left( {{\bf \bar x}_N  - {\bf \bar x}_s } \right)
\end{equation}
\begin{equation}
 J_l  = \sum\limits_{i = 0}^N {{\bf \Lambda }_i^T {\bf S\Lambda }_i } ,
\end{equation}
\begin{equation}
\begin{split}
J_{x0}  &= \\
&q_s \sum\limits_{k \in \left\{ {0,2} \right\}} {\sum\limits_{i = 0}^{N-1} {\exp \left( {\underline{\bar x}_{k }  - \bar x_{k,i} } \right) + \exp \left( {\bar x_{k,i}  - \overline{\bar x}_{k} } \right)} } \\
&+{\exp \left( {\underline{\bar t}_{k}  - \bar x_{k,N} } \right) + \exp \left( {\bar x_{k,N}  - \overline{\bar t}_{k } } \right)},
\end{split}
\end{equation}
\begin{equation}
\begin{split}
J_{x1} & = \\
q_s & \sum\limits_{k \in \left\{ {1,3} \right\}}  \sum\limits_{i = 0}^{N-1} { \exp  { \left[ { - \left( {\underline{\lambda}_{i} \underline{\bar x}_{k}  + \lambda _{i} \bar x_{det,k }  + \overline {\lambda}_{i} \overline{\bar x}_{k} } \right) - \bar x_{k,i} } \right]} }  \\
&+  \exp  { \left[ {\bar x_{k,i}  - \left( {\underline{\lambda}_{i} \underline{\bar x}_{k}  + \lambda _{i} \bar x_{det,k}  + \overline {\lambda}_{i} \overline{\bar x}_{k} } \right)} \right]} \\
&+  \exp  { \left[ { - \left( {\underline{\lambda}_{N} \underline{\bar t}_{k}  + \lambda _{N} \bar t_{det,k}  + \overline {\lambda}_{N} \overline{\bar t}_{k} } \right) - \bar x_{k,N} } \right]} \\
&+  \exp  { \left[ {\bar x_{k,N}  - \left( {\underline{\lambda}_{N} \underline{\bar t}_{k}  + \lambda _{N} \bar t_{det,k}  + \overline {\lambda}_{N} \overline{\bar t}_{k} } \right)} \right]} ,
\end{split}
\end{equation}
\begin{equation}
\begin{split}
J_{u}  &= q_{u}\sum\limits_{i = 0}^{N - 1} { \exp  { \left[ { - \left( {\underline{\lambda}_{i} \underline{\bar \delta}  + \lambda _{i} \bar \delta _{det}  + \overline {\lambda}_{i} \overline{\bar \delta} } \right) - \bar\delta _i } \right]} } \\
&+  \exp  { \left[ {\bar\delta _i  - \left( {\underline{\lambda}_{i} \underline{\bar \delta}  + \lambda _{i} \bar \delta _{det }  + \overline {\lambda}_{i} \overline{\bar \delta} } \right)} \right]} ,
\end{split}
\end{equation}
\begin{equation}
\begin{split}
J_{lu}  &= q_{l1} \sum\limits_{i = 0}^N {\exp \left( { - q_{l1} \underline{\lambda}_i } \right) + \exp \left[ {q_{l1} \left( {\underline{\lambda}_i  - 1.0} \right)} \right]}  \\ &+ \exp \left( { - q_{l1} \lambda _i } \right) + \exp \left[ {q_{l1} \left( {\lambda _i  - 1.0} \right)} \right] \\ &+ \exp \left( { - q_{l1} \overline{\lambda}_i  } \right) + \exp \left[ {q_{l1} \left( {\overline{\lambda}_i   - 1.0} \right)} \right],
\end{split}
\end{equation}
\begin{equation}
\begin{split}
J_{ls} & = q_{l2} \sum\limits_{i = 0}^N {\exp \left\{ {q_{l2} \left[ {1.0 - \left( {\underline{\lambda}_{i}  + \lambda _{i}  + \overline {\lambda}_{i} } \right)} \right]} \right\}}  \\
&+ \exp \left\{ {q_{l2} \left[ {\left( {\underline{\lambda}_{i}  + \lambda _{i}  + \overline {\lambda}_{i} } \right) - 1.0} \right]} \right\}
\end{split}
\end{equation}
\end{subequations}
subject to
\begin{subequations}
\begin{equation}
{\bf \bar x}_{i + 1}  = {\bf \bar f}\left( {{\bf \bar x}_i ,{\bf \bar u}_i,{\bf \hat d}^ + } \right)  ,\quad 0 \le i < N,
\end{equation}
\begin{equation}
{\bf \bar x}_0 = {\bf \hat x}^ +  . 
\end{equation}
\end{subequations}
Hre the parameters $\underline{\bar x}_{0/2 } $ ($\underline{\bar t}_{0/2 } $) and $\overline{\bar x}_{0/2 } $ ($\overline{\bar t}_{0/2 } $) denote the  lower and upper bounds, respectively, of the original constraints on $ x_{0/2}$. The terms $\bar x_{det,1 } $/$\bar t_{det,1 } $, $\bar x_{det,3 } $/$\bar t_{det,3} $, and $\bar \delta_{det } $ represent the upper bounds of the tightened constraints on $\bar x_1$, $\bar x_3$, and $\bar\delta$, respectively.  These bounds are derived  online through a lookup table on the basis of the detected road curvature.  The terms $\underline{\bar x}_1$/$\overline{\bar x}_1$, $\underline{\bar t}_1$/$\overline{\bar t}_1$, $\underline{\bar x}_3$/$\overline{\bar x}_3$, $\underline{\bar t}_3$/$\overline{\bar t}_3$, and $\underline{\bar \delta}$/$\overline{\bar \delta}$ indicate the designed constraint bounds, which adhere to the following design rules  \cite{Lee26}: $0 \le \underline{\bar x}_1 < {\bar x}_{det,1} < \overline{\bar x}_1$, $0 \le \underline{\bar t}_1 \le {\bar t}_{det,1} < \overline{\bar t}_1$, $0 \le \underline{\bar x}_3 < {\bar x}_{det,3} < \overline{\bar x}_3$, $0 \le \underline{\bar t}_3 < {\bar t}_{det,3} < \overline{\bar t}_3$, and $0 \le \underline{\bar \delta} < {\bar \delta}_{det} < \overline{\bar \delta}$. Here, ${\bar x_{det}  - \underline{\bar x} } = {\overline{\bar x}  - \bar x_{det} } \equiv D \bar x_{det }$, ${\bar t_{det}  - \underline{\bar t} } = {\overline{\bar t}  - \bar t_{det} } \equiv D \bar t_{det }$, and ${\bar \delta_{det}  - \underline{\bar \delta} } = {\overline{\bar \delta}  - \bar \delta_{det} } \equiv D \bar \delta_{det }$. In addition, $\overline{\bar x}$, $\overline{\bar t}$, and $\overline{\bar \delta}$ are constrained to not exceed the actual system constraint bounds. Specifically, the interpolation weight of $\bar x_{det}$/$\bar t_{det}$/$\bar \delta_{det}$ (i.e., ${\lambda }$) is fixed at ${\lambda } =1.0-2D$, where $D$ is set to 0.22 to reduce the number of variables and thus decrease the computational cost. Finally, the scalars $q_{s}$, $q_{u}$, $q_{l1}$, and $q_{l2}$ denote penalty parameters. The penalty parameters  selected in this study are listed in Table II.

\begin{figure*}[!t]
\centerline{\includegraphics[scale=0.25]{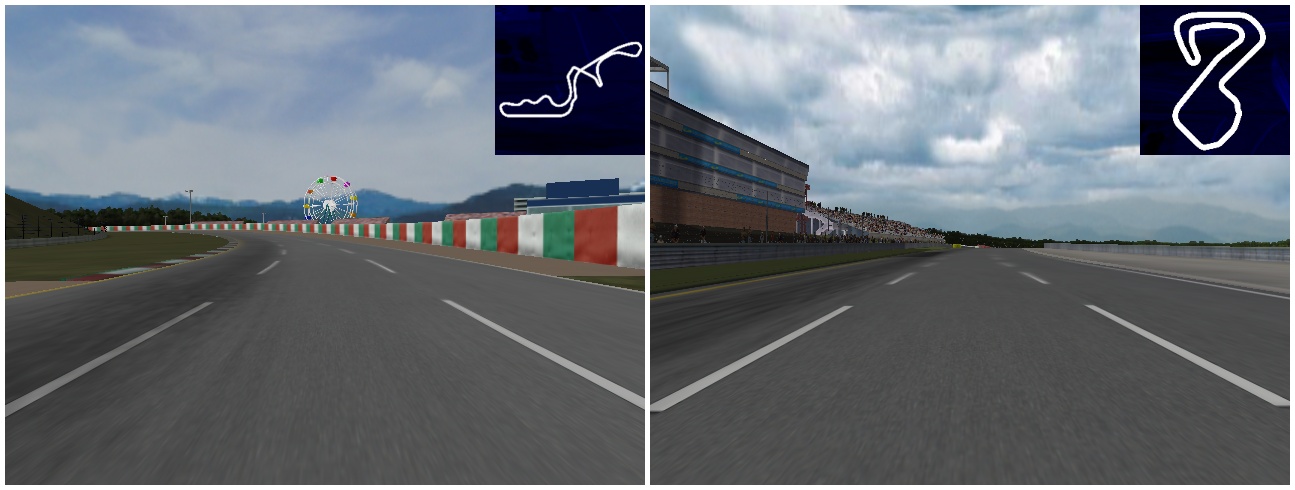}}
\caption{Example traffic scene and maps of track A (left) and track B (right) in TORCS. The input of the adopted deep neural network model was an RGB image with a resolution of 228 $\times$ 228 pixels.}
\end{figure*}

\begin{figure}[!t]
\centerline{\includegraphics[scale=0.3]{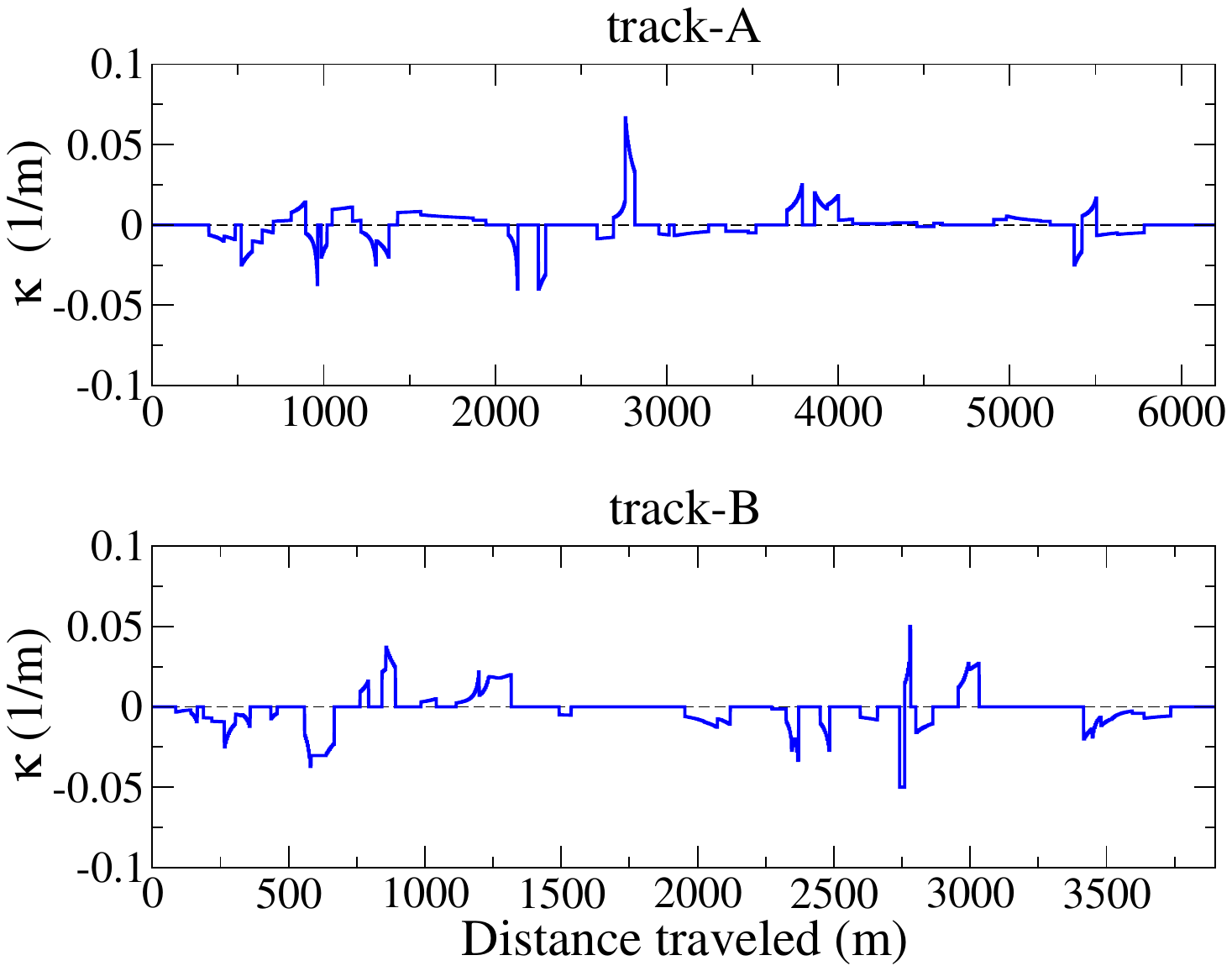}}
\caption{Curvature profiles of tracks A and B.}
\end{figure}

To solve Problem 3, the state, control, and interpolation variables are optimized iteratively through alternating backward and forward propagation steps by using the ILQR algorithm \cite{Tas14} and Newton's  method \cite{Lee25}, respectively. The corresponding update equations for these variables are expressed as follows:
\begin{subequations}
\begin{equation}
{\bf \tilde  u}_i  = {\bf \bar u}_i  + {\bf \tilde k}_i  + {\bf \tilde K}_i \left( {{\bf \tilde x}_i  - {\bf \bar x}_i } \right),
\end{equation}
\begin{equation}
{\bf \tilde x}_{i + 1}  = {\bf \bar f}\left( {{\bf \tilde x}_i ,{\bf \tilde u}_i,{\bf \hat d}^ + } \right) ,
\end{equation}
\begin{equation}
{\bf \tilde \Lambda }_i  = {\bf \Lambda }_i  - \left[ {{\bf H}\bar J_{{\mathop{ int}} } '\left( {{\bf \Lambda }_i } \right)} \right]^{ - 1} \nabla \bar J_{{\mathop{ int}} } '\left( {{\bf \Lambda }_i } \right),
\end{equation}
\end{subequations}
where the controller parameters ${\bf \tilde k}_i $ and ${\bf \tilde K}_i $ are computed using the analytical formulas presented in \cite{Tas14} and the Hessian matrix ${\bf H}\bar J_{{\mathop{ int}}}'\left( {{\bf \Lambda}_i } \right) \succ 0$. 

In addition to using the itube-CILQR algorithm to solve the regulation problem (Problem P3) in the proposed robust OF-MPC framework, this study employed two established optimization solvers, namely the Interior Point Optimizer (IPOPT) \cite{Wac06, Sha24} and the Operator Splitting Quadratic Program (OSQP) solver \cite{Ge22, Ge22c, Bar20} to solve  the regulation problem (Problem P2') for performance comparison. The experimental results are presented in the following section.

\section{Experimental Results and Discussion}
This section presents the experimental results obtained with the proposed robust OF-MPC framework. First, numerical simulations of lane-keeping control on large-curvature roads were conducted, with the control performance of the itube-CILQR algorithm compared with that of IPOPT-based and OSQP-based algorithms. Subsequently, vision-based autonomous driving experiments were conducted in TORCS \cite{Wym13, Che15, Car15, Bon17, Wu23}, which  is an open-source driving simulator  providing a high-fidelity environment suitable for training and validating autonomous driving models. The actuation delay in TORCS\ was approximately 6.66 ms on the adopted computer. This lag may have arisen from computationally intensive graphics rendering, physics-based simulations, and data serialization. In addition to road curvature, road friction and measurement noise were considered in the vision-based experiments. All experiments were
performed on a personal computer with an Intel i9-9900K CPU equipped with an Nvidia RTX 2080 Ti GPU.

\subsection{Numerical Simulations}
The lane-keeping maneuver in the numerical simulations involved an ego vehicle traveling on a road with two tight curves having curvatures ($\kappa $) of 0.08 and \Pisymbol{@symbol}{45}0.05 1/m, which were encountered at time steps 450 $\le$ $t$ $\le$ 700 and 950 $\le$ $t$ $\le$ 1200, respectively. The initial state of the ego vehicle was assumed to be ${\bf x}_0  = {\bf \hat x}_0  =\left[ {\begin{array}{*{20}c}
   2.0 & 0.0 & 0.0 & 0.0  \\
\end{array}} \right]^T$, with the initial disturbance vector set to ${\bf \hat d}_0  =\left[ {\begin{array}{*{20}c}
   {0.0} & {0.0}  \\
\end{array}} \right]^T $. Moreover, the ego vehicle had a constant speed of 20 m/s.

Fig. 4 presents the numerical simulation results obtained using IPOPT- and OSQP-based solvers. These solvers could not achieve convergence when the optimization problem incorporated terminal constraints with a scaling of $c_i $ = $10^{-16}$ or 20 because   these constraints were overly restrictive and prevented both solvers from finding feasible solutions (cf. Fig. 3). To address this limitation, we replaced  the aforementioned terminal constraints with  less restrictive stage constraints in the IPOPT- and OSQP-based solvers to develop the tube-IPOPT-X and tube-OSQP-X solvers, respectively. However, system stability was not guaranteed with these stage constraints, which approximated the original terminal constraints. The performance of the tube-IPOPT-X and tube-OSQP-X solvers was compared with that of a nominal-IPOPT solver for which the disturbance vector and steady-state targets were set to zeros in the OF-MPC framework.

Fig. 4(a) and 4(b) displays the trajectories of the system states ($\Delta$) and control inputs ($\delta$) obtained with the three solvers, respectively. As indicated in Fig. 4(a), the nominal-IPOPT solver generated  a plateau-like trajectory at both turns because of the corresponding road curvatures, with the maximum lateral offset ($\Delta$) being approximately 0.5 m. By contrast, the tube-IPOPT-X and tube-OSQP-X solvers produced nearly zero lateral offset at both turns. However, they generated small transient overshoots when the ego vehicle entered or exited a turn. As presented in Table III, the tube-IPOPT-X solver required an average computation time of 6.45 ms to produce a solution with a maximum lateral offset of 0.024333 m around $t$ = 720. By comparison, the  tube-OSQP-X solver produced a smaller maximum lateral offset  of 0.024268 m around $t$ = 720 within an average computation time of 1 ms. Because the uncertainty parameter structure of ${\bf \hat d }$ was known, the road curvatures were estimated by dividing  ${\hat d_0 }$ and ${\hat d_1 }$ by  $\gamma _0$ and $\gamma _1$, respectively. Fig. 4(c) illustrates the trajectories of the estimated road curvatures obtained with the tube-OSQP-X algorithm. The OF-MPC framework enabled both solvers to  accurately estimate the overall curvature profiles, with overshoot limited to only transient intervals [Fig. 4(a) and 4(b)]. Visual information was only used to determine tightened constraints, whereas road curvature estimation exclusively relied on the optimization of the disturbance vector in the OF-MPC framework. Accordingly, the OF-MPC framework can accurately estimate road curvature under minimum internal or external disturbance. Consequently, we used this framework to determine the perceptual uncertainty of the adopted visual perception system.

\begin{figure}[!t]
\centerline{\includegraphics[scale=0.3]{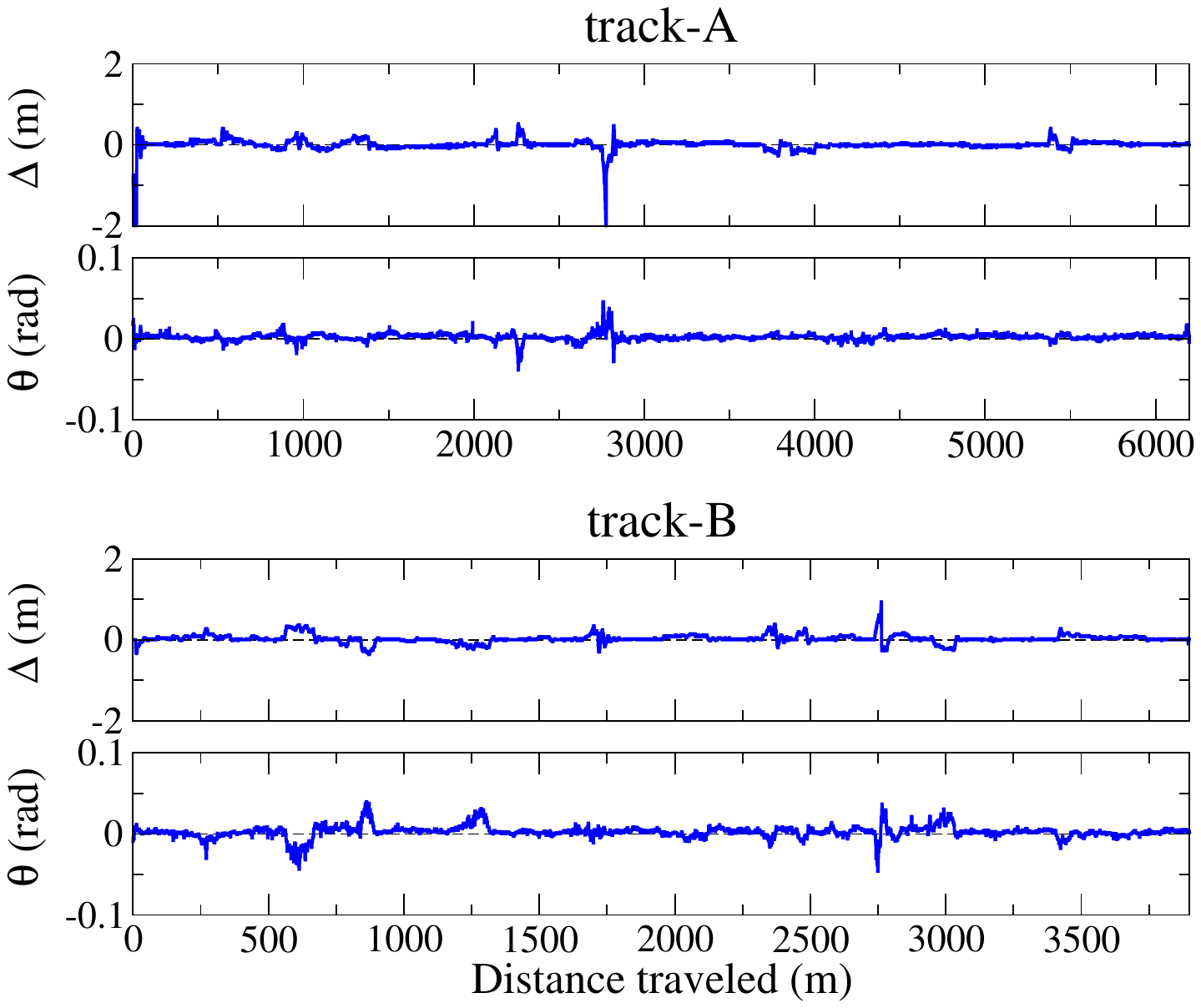}}
\caption{Dynamic performance of the itube-CILQR-T-20 algorithm in lane-keeping maneuvers within the central lanes of tracks A and B at a vehicle speed  of 72 km/h when the noise level ($\sigma$) was set to 0 and the friction coefficient ($fr$) was set to 1.0.}
\end{figure}

Figs. 5 and 6 present the trajectories of all state, control, and disturbance variables obtained using three  itube-CILQR-based solvers. Nearly offset-free trajectory tracking was achieved for all state variables ($x_0$, $x_1$, $x_2$, $x_3$), except during transient intervals, consistent with the results displayed in Fig. 4. Similar to the tube-IPOPT-X and tube-OSQP-X solvers, the itube-CILQR-X solver employed stage constraints instead of terminal constraints. By contrast, the itube-CILQR-T-20 and itube-CILQR-T-0 solvers used  ellipsoid terminal sets with  scaling constraints of $c_i$ = 20.0 and $10^{-16}$, respectively. These itube-CILQR solvers were developed by optimizing the design introduced in \cite{Lee26} to achieve  considerably higher computation speeds. As indicated in Table III, the itube-CILQR-X solver exhibited an average computation time slightly exceeding 1 ms, which is shorter than that reported in \cite{Lee26}. Moreover, by selecting suitable barrier parameters ($q_{s}$, $q_{u}$, $q_{l1}$, and $q_{l2}$), this solver reduced the maximum lateral offset to  0.023162 m, thereby outperforming the tube-IPOPT-X  and tube-OSQP-X solvers. The itube-CILQR-T-20 and itube-CILQR-T-0 solvers achieved convergence with terminal ellipsoid constraints, which not only guaranteed the stability of the controlled
system but also enhanced lateral control performance. The maximum lateral offset  decreased to 0.022097 and 0.021675 m when the itube-CILQR-T-20 and itube-CILQR-T-0 solvers were employed, respectively.\ The itube-CILQR-T-0 solver with $c_i$ = $10^{-16}$ exhibited a smaller terminal constraint region and produced solutions with lower uncertainty than did the itube-CILQR-T-20 solver with $c_i$ = 20. However, a trade-off between computational burden and performance was observed, with the average computation time of the itube-CILQR-T-0 solver being 1.74 times longer than that of the itube-CILQR-T-20 solver. Because the actuation delay in TORCS on the adopted computer was approximately 6.66 ms, we used the itube-CILQR-T-20 solver to adequately balance between control accuracy and computational efficiency in the vision-based autonomous driving experiments.

\begin{figure}[!t]
\centerline{\includegraphics[scale=0.3]{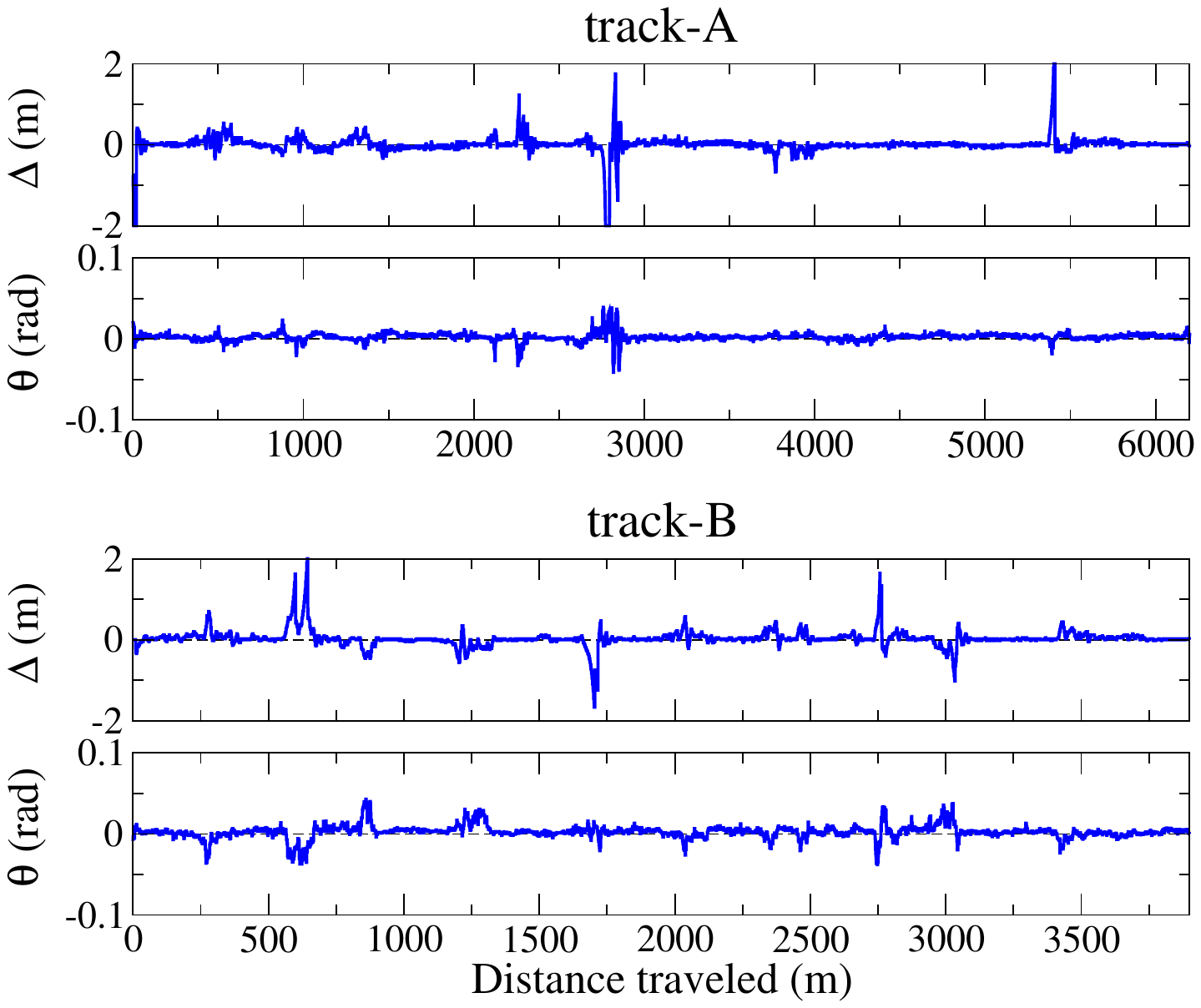}}
\caption{Dynamic performance of the itube-CILQR-T-20 algorithm in lane-keeping maneuvers within the central lanes of tracks A and B at a vehicle speed  of 72 km/h when $\sigma$ = 1.8$\%$ and 2.0$\%$ and $fr$ = 1.0.}
\end{figure}

\begin{table*}[!t]
\caption{Average and Standard Deviation Values of  the $\kappa$-Score During Lane-Keeping Maneuvers Conducted Using the itube-CILQR-T-20 Solver at Different $\sigma$ and $fr$ Values}
\begin{center}
\begin{tabular}{|c|c|c|c|c|c|c|c|c|c|c|}\hline
\multirow{3}{*}{$\sigma$}& \multicolumn{5}{c|}{track-A}   & \multicolumn{5}{c|}{track-B}   \\ \cline{2-11} 
 & \multicolumn{3}{c|}{$fr$} &\multirow{2}{*}{Avg}&\multirow{2}{*}{SD}  &\multicolumn{3}{c|}{$fr$}&\multirow{2}{*}{Avg}&\multirow{2}{*}{SD} \\ \cline{2-4} \cline{7-9}
  & 1.0 & 0.9 & 0.8 &   &  & 1.0 & 0.9 & 0.8&   &   \\ \hline
0.5 $\%$ &0.9941 &0.9159 & 0.9324&0.9474 &0.0412 &0.8604 &0.8616 &0.8720 &0.8646 &0.0063 \\ \hline

0.8 $\%$&1.4989 & 1.4021&1.4266 &1.4425 &0.0503 &1.3625 &1.3294 &1.2987 &1.3302 &0.0319 \\ \hline

1.0 $\% $&1.7393 &1.6674 & 1.6334&1.6800 &0.0540 &1.5741 &1.6156 &1.4961 &1.5619 &0.0606 \\ \hline

1.3 $\%$& 1.9226&1.9093 & 1.9523&1.9280 & 0.0220&1.8943 &1.8633 &1.6802 &1.8126 &0.1157 \\ \hline

1.5 $\%$& 2.0771&1.9538 &1.8004 &1.9437 & 0.1386& 2.0595 &1.7904 &1.4862 &1.7787 &0.2868 \\ \hline

1.8 $\%$&  1.8727& 1.7448& Fail&1.8087 &0.0904 &1.8432 &1.6311 &1.2930 &1.5891 &0.2774 \\ \hline

2.0 $\%$&Fail & Fail&-- &-- &-- &1.8593 &Fail &Fail &-- &-- \\ \hline

2.2 $\%$& --&-- &-- &-- &-- &1.3220 &-- &-- &-- &-- \\ \hline

2.3 $\%$&-- & --&-- & --&-- &Fail &-- &-- &-- &-- \\ \hline
\end{tabular}
\end{center}
\end{table*}

\subsection{Vision-Guided Lane-Keeping Experiments}
This study evaluated the performance of the itube-CILQR-T-20 solver in autonomous vehicle lane-keeping experiments within a TORCS environment. The adopted visual perception system incorporated a deep neural network model, namely multitask UNet (MTUNet) \cite{Lee24, Ron15}. To extract visual features, this model processes images  captured by a front-facing dashboard camera mounted on the windshield of an ego vehicle at a rate of 40 frames per second. On the basis of the extracted features, the MTUNet model estimates the ego vehicle's heading angle $\theta$ and generates a binary lane map in the egocentric view. Postprocessing techniques are then used to determine the immediate lateral offset $\Delta$ and road curvature $\kappa$ relative to the current ego vehicle position. The estimated $\Delta$ and $\theta$ values constitute the output vector ${\bf y} \equiv  \left[ {\begin{array}{*{20}c}
   \Delta  & \theta   \\
\end{array}} \right]^T$, which was used by the OF-MPC framework in this study to achieve accurate road centerline tracking. The itube-CILQR-T-20 solver operated under tightened constraints, which were determined from an offline precomputed look-up table on the basis of the $\kappa$ value estimated online \cite{Lee26}. Additionally, the longitudinal velocity of the ego vehicle was independently regulated by a proportional\textendash integral controller \cite{Lee24} to maintain a constant speed of 72 km/h (20 m/s). To rigorously assess the performance and robustness of the itube-CILQR-T-20 solver under varying environmental conditions, experiments were conducted on 2 three-lane highway tracks in TORCS, namely tracks A and B, which had a lane width of 4 m. The  total lengths of tracks A and B were 6205 and 3919 m, respectively, and their corresponding maximum curvatures were approximately 0.06 and 0.05 1/m. Figs. 7 and 8 depict an example traffic scene, the maps of tracks A\ and B, and the curvature profiles of these tracks. The road friction coefficient ($fr$) was varied between 0.8 and 1.0 to reflect different road adhesion conditions in TORCS \cite{Bon17}.

White noise was introduced into $\Delta$ and $\theta$ at each time step to generate the noisy measurement vector ${\bf \hat y}$, which is expressed as follows:  
\begin{equation}
{\bf \hat y}_i  = \left[ {\begin{array}{*{20}c}
   {\left( {1 + \sigma _i } \right)\Delta } & {\left( {1 + \sigma '_i } \right)\theta }  \\
\end{array}} \right]^T ,
\end{equation}
where $\sigma _i$ and $\sigma '_i$ are  uniformly distributed random numbers in the range of [\Pisymbol{@symbol}{45}$ \sigma,  \sigma$], where $\sigma$ is the noise level. The same noise level was assumed for both $\Delta$ and $\theta$. Subsequently, ${\bf y}$, $\bf \hat y,$ and the disturbance vector  ${\bf \hat d}$  (a zero vector) were used as inputs to the Kalman filter to estimate the  updated state and disturbance vectors (${\bf \hat x}^ +  $ and ${\bf \hat d}^ +  $, respectively). Because the OF-MPC framework accurately estimated road curvature under minimum disturbance in the numerical simulations (Sec. V-A), we  used this framework to determine the perceptual uncertainty of  the adopted visual perception framework \cite{Lee24} based on this observation. For this purpose, we defined a $\kappa$-score as the ratio of the noise level ($\sigma$) to the mean absolute difference between the true curvature $\kappa$ and the average estimated curvature [0.5 \Pisymbol{@symbol}{180} ($\hat d^+_{0} /\gamma _0$ + $\hat d^+_{1} /\gamma _1$)] across all time steps in a complete lane-keeping maneuver. The $\kappa$-score is expressed as follows:
\begin{equation}
\kappa {\rm  - score} \equiv \left( {\frac{1}{\sigma }\sum\limits_{i = 1}^M {\frac{{\left| {\kappa _i  - 0.5\left( {\hat d^+_{0,i} /\gamma _0  + \hat d^+_{1,i} /\gamma _1 } \right)} \right|}}{M}} } \right)^{ - 1},
\end{equation}
where $M$ is the total number of data points in a complete lane-keeping maneuver. Because $\hat d^+_{0} $ and $\hat d^+_{1} $ were estimated using the Kalman filter, the $\kappa$-score inherently reflected information obtained through the fusion of $\Delta$ and $\theta;$   thus, this metric may be more accurate and reliable than those derived from either indicator alone. We varied the noise level systematically to identify the $\sigma$ value yielding the  highest  $\kappa$-score, with this value serving as an estimate of the uncertainty of the adopted visual perception framework.  This estimation was based on the assumption that the uncertainty resulted from white noise.

 When $fr$ = 1.0, the itube-CILQR-T-20 solver, which incorporated the OF-MPC framework, effectively maintained the ego vehicle within the center lane  over one complete lap on tracks A and B under noise-free conditions ($\sigma$ = 0; Fig. 9). It also ensured accurate lane-keeping control on tracks A and B at noise levels ($\sigma$) of up to 1.8$\%$ and 2.0$\%$  (Fig. 10), respectively, when $fr$ = 1.0. However, as presented in Table IV, the ego vehicle deviated from the central lane at the curved sections of both tracks when $\sigma$ exceeded the aforementioned values. When $fr$ = 1.0, 0.9, and 0.8, the ego vehicle failed to complete the lane-keeping maneuver at $\sigma$ values equaling or exceeding 2.0$\%$, 2.0$\%$, and 1.8$\%$, respectively, on track A and 2.3$\%$, 2.0$\%$, and 2.0$\%$, respectively, on track B.  These results indicate that lane-keeping failed at lower noise levels on track A  than on track B, which may be attributed to the lower image quality for track A (Fig. 7). The $\kappa$-scores for each maneuver were averaged across all $fr$ values, with the highest mean scores on tracks A and B occurring when $\sigma$ = 1.5$\%$ and 1.3$\%$, respectively. These noise levels represent the perceptual uncertainty of the adopted vision perception system on tracks A and B, which were not used in the network training stage.

\section{Conclusion}
This study developed a robust OF-MPC framework for ensuring lane-keeping control during high-speed cornering maneuvers in autonomous driving scenarios in which only a vision sensor is used to perceive driving scenes   \cite{Raw03, Lee26, Lee24, Raw17, Lee25}. The regulation problem in this control framework can be solved using various solvers, including CILQR-based, IPOPT-based, and OSQP-based solvers. Numerical simulations revealed that the itube-CILQR-T-20 solver exhibited the best balance between control performance and computational efficiency among all compared solvers. It ensured robust, stable, and offset-free tracking of the controlled system and demonstrated an average computation time of 4.21 ms.  Therefore, this solver was used in vision-based experiments conducted in TORCS environments to quantify the uncertainty of a visual sensing model (MTUNet \cite{Lee24}) on highway tracks not encountered during model training. Furthermore, a $\kappa$-score was defined to measure the perceptual uncertainty of this model at various road friction and noise levels in the proposed OF-MPC\ framework. The uncertainty (noise level) of the model was 1.5$\%$ on track A and 1.3$\%$ on track B. Overall, the developed robust OF-MPC\ framework and proposed uncertainty estimation method can provide a foundation for the future development of advanced vision-based autonomous driving systems.

\appendix
The appendix provides proofs of the closed-loop properties of the regulation problem in the proposed OF-MPC framework. Specifically, the recursive feasibility and input-to-state stability (ISS) are analyzed under the stated assumptions. The notation used here is the same as that adopted for Problem P2'.

\section*{I. Recursive Feasibility}
\setcounter{equation}{0}
\renewcommand{\theequation}{A-\arabic{equation}}

\begin{theorem}
Assume that steady-state targets exist and that the standard axioms of the classical MPC problem are satisfied. Suppose that the MPC problem is feasible at the current time step $t$. For a fixed terminal ellipsoid with a scaling parameter $c_f$, the optimization problem remains feasible at all subsequent time steps $t+1$, despite arbitrary bounded shifts in the steady-state targets.
\end{theorem}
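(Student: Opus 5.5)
The argument is the standard shifted-candidate construction for tube MPC, combined with the enlarged terminal ellipsoid $\mathbb{\bar T}$ to absorb the change of target. Let $({\bf \bar X}^*, {\bf \bar U}^*)$ be the optimal solution of Problem P2' at time $t$, with target ${\bf \bar x}_s(\hat{\bf d}_t)$. At time $t+1$ I will build the usual candidate
\[
\tilde{\bf U}=\{{\bf \bar u}^*_1,\dots,{\bf \bar u}^*_{N-1},\kappa_f({\bf \bar x}^*_N)\},
\]
where $\kappa_f({\bf \bar x})={\bf \bar u}_s+{\bf K}_f({\bf \bar x}-{\bf \bar x}_s)$ is the LQR terminal law associated with the DARE solution ${\bf P}$. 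I will then show that this candidate satisfies every constraint of P2' at time $t+1$.

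\textbf{Stage constraints.} By the tube construction, the nominal trajectory only has to satisfy the tightened sets $\mathbb{\bar X}$ and $\mathbb{\bar U}$. The robust positive invariance of $\mathbb{S}^e$ and $\mathbb{S}^x$ guarantees that the actual and estimated states stay in ${\bf \bar x}\oplus(\mathbb{S}^e\oplus\mathbb{S}^x)$. This lets me take the initial condition at $t+1$ as ${\bf \bar x}^*_1$, the standard tube-MPC choice that the axioms allow. The shifted states ${\bf \bar x}^*_1,\dots,{\bf \bar x}^*_N$ and inputs ${\bf \bar u}^*_1,\dots,{\bf \bar u}^*_{N-1}$ were already feasible at time $t$ for the tightened sets, so they remain feasible.

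\textbf{Terminal step, under the old target.} I use the standard axioms: the ellipsoidal level set of ${\bf P}$ is positively invariant under the closed loop ${\bf A}+{\bf B}{\bf K}_f$, the terminal law is admissible in $\mathbb{\bar U}$ on that set, and the level set is contained in $\mathbb{\bar X}$. Hence ${\bf \bar x}^*_N\in\mathbb{\bar T}_i(\hat{\bf d}_t)$ implies that ${\bf \bar x}^*_{N+1}={\bf A}{\bf \bar x}^*_N+{\bf B}\kappa_f+{\bf B}_d\hat{\bf d}$ stays in the level set centered at ${\bf \bar x}_s(\hat{\bf d}_t)$. The steady-state relation (first row of the equality constraint in P1) cancels the ${\bf B}_d\hat{\bf d}$ term in error coordinates.

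\textbf{Terminal step, after the target shift (main obstacle).} The target moves to ${\bf \bar x}_s(\hat{\bf d}_{t+1})$, and the candidate must end in the set belonging to the new target. I will use the triangle inequality in the ${\bf P}$-weighted norm $\|{\bf z}\|_{\bf P}=\sqrt{{\bf z}^T{\bf P}{\bf z}}$:
\[
\|{\bf \bar x}^*_{N+1}-{\bf \bar x}_s^{new}\|_{\bf P}\le \|{\bf \bar x}^*_{N+1}-{\bf \bar x}_s^{old}\|_{\bf P}+\|\delta{\bf \bar x}_s\|_{\bf P}\le\sqrt{c_i}+\|\delta{\bf \bar x}_s\|_{\bf P}.
\]
The Lipschitz bound of the steady-state map ${\bf \Pi}$ and the bound $\|\delta{\bf \bar x}_s\|\le\|\delta{\bf \bar x}_s\|_{\max}$ then give containment in an ellipsoid of scale $(\sqrt{c_i}+\|\delta{\bf \bar x}_s\|_{\max})^2=c_f$. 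This is exactly the set inclusion $\mathbb{\bar T}({\bf \bar x}_s(\hat{\bf d}))\subseteq\mathbb{\bar T}({\bf \bar x}_s(\hat{\bf d}+\delta\hat{\bf d}))$ stated before the theorem, so the candidate lies in the fixed terminal set $\mathbb{\bar T}$.

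The delicate points are the metric mismatch and the recursion itself. The shift bound is stated in the Euclidean norm, so I will either insert $\lambda_{\max}({\bf P})^{1/2}$ or interpret $\|\delta{\bf \bar x}_s\|_{\max}$ as already measured in the ${\bf P}$-norm; I will flag this as a convention. To make the recursion close at every step, I will also require that $\mathbb{\bar T}$ with scale $c_f$ is itself invariant under $\kappa_f$ and contained in $\mathbb{\bar X}$. I will take this as part of the design axioms, consistent with Fig.~3 showing the enlarged sets inside the tightened state regions. Induction on $t$ then completes the proof.
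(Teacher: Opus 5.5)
\textbf{Gap in the terminal step.} Your skeleton matches the paper's: a shifted candidate closed by ${\bf \kappa}_f$, stage feasibility inherited from time $t$, and terminal feasibility from positive invariance of $\mathbb{\bar T}$. The step you offer for the target shift, however, rests on a premise that P2' does not provide. Your triangle-inequality chain starts from ${\bf \bar x}^*_N\in\mathbb{\bar T}_i(\hat{\bf d}_t)$, the $c_i$-ellipsoid about the old target. P2' only imposes ${\bf \bar x}_N\in\mathbb{\bar T}$ with scale $c_f$. From what you actually know, $\|{\bf \bar x}^*_N-{\bf \bar x}_s^{\rm old}\|_{\bf P}\le\sqrt{c_f}$, the same chain gives radius $\sqrt{c_f}+\|\delta{\bf \bar x}_s\|_{\bf P}>\sqrt{c_f}$. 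So the inclusion $\mathbb{\bar T}_i({\bf \bar x}_s^{\rm old})\subseteq\mathbb{\bar T}({\bf \bar x}_s^{\rm new})$ holds only once. One step later you sit in the large set, not the small one, and the argument cannot be iterated. Also, the ellipsoid you land in is centred at the new target, so it is not the fixed set $\{{\bf \bar x}\,|\,{\bf \bar x}^T{\bf P}{\bf \bar x}\le c_f\}$ either.

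Everything therefore hinges on your closing hypothesis that the $c_f$-ellipsoid stays invariant under ${\bf \kappa}_f$ while the target moves. That is exactly the unproved assertion in the paper's proof (``because the maximum ellipsoidal set $\mathbb{\bar T}$ is positively invariant''). Granting it, your detour is unnecessary and the proposal reduces to the paper's argument. But invariance of a ${\bf P}$-level set about a \emph{fixed} target is automatic from the DARE. The hypothesis has content only as invariance under re-centring, so it assumes the substance of the theorem outright.

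\textbf{How to close it.} Use the contraction the DARE gives. With ${\bf A}_K={\bf A}+{\bf B}{\bf K}_f$, one has ${\bf A}_K^T{\bf P}{\bf A}_K={\bf P}-{\bf Q}-{\bf K}_f^T{\bf R}{\bf K}_f$. Hence $\|{\bf A}_K{\bf e}\|_{\bf P}\le\rho\|{\bf e}\|_{\bf P}$ with $\rho<1$, because ${\bf Q}\succ 0$. Take $\mathbb{\bar T}$ centred at the current target and centre ${\bf \kappa}_f$ at the new one. Then ${\bf \bar x}^+-{\bf \bar x}_s^{\rm new}={\bf A}_K({\bf \bar x}^*_N-{\bf \bar x}_s^{\rm new})$, so $\|{\bf \bar x}^+-{\bf \bar x}_s^{\rm new}\|_{\bf P}\le\rho(\sqrt{c_f}+\|\delta{\bf \bar x}_s\|_{\bf P})$. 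This is at most $\sqrt{c_f}$ provided $\|\delta{\bf \bar x}_s\|_{\bf P}\le(1-\rho)\sqrt{c_f}/\rho$. Recursive feasibility needs this margin condition linking $c_f$ to the per-step target shift, and the additive formula for $c_f$ does not supply it. Your ${\bf P}$-norm versus Euclidean-norm remark is correct and enters exactly here.

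Two further steps are glossed in the paper as well; state them as assumptions or repair them.
\begin{enumerate}
\item P2' fixes ${\bf \bar x}_0=\hat{\bf x}^+_{t+1}$, which differs from ${\bf \bar x}^*_{1|t}$ by the Kalman correction. You cannot simply ``take'' ${\bf \bar x}^*_{1|t}$ as the initial condition unless ${\bf \bar x}_0$ becomes a decision variable constrained by $\hat{\bf x}^+\in{\bf \bar x}_0\oplus\mathbb{S}^x$, as in Mayne-type output-feedback tube MPC.
\item At $t+1$ the prediction model uses $\hat{\bf d}^+_{t+1}$, so the shifted inputs no longer reproduce ${\bf \bar x}^*_{2|t},\dots,{\bf \bar x}^*_{N|t}$. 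The tail is perturbed by terms ${\bf A}^k{\bf B}_d\delta\hat{\bf d}$, which both the stage and the terminal checks must absorb.
\end{enumerate}
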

\textit{Proof:} Suppose that the optimal control sequence for solving Problem P2' under the estimated disturbance ${\bf \hat d}$ has been determined at time step $t$  and is given by ${\bf \bar U}_t^*  = \left\{ {{\bf \bar u}_{\left. 0 \right|t}^* ,{\bf \bar u}_{\left. 1 \right|t}^* ,...,{\bf \bar u}_{\left. {N - 1} \right|t}^* } \right\}$. Let ${\bf \bar X}_t^*$ denote the optimal state trajectory, which is expressed as ${\bf \bar X}_t^*  = \left\{ {{\bf \bar x}_{0} ,{\bf \bar x}_{\left. 1 \right|t}^* ,...,{\bf \bar x}_{\left. N \right|t}^* } \right\}$, where ${\bf \bar x}_{0}$ represents the initial state, ${\bf \bar z}_{s,t} \equiv \left[ {{\bf \bar x}^T_{s,t} ,{\bf \bar u}^T_{s,t} } \right]^T$ represents the steady-state target,  and ${\bf \bar x}_{\left. N \right|t}  \in \mathbb{\bar T } $ represents the terminal state. The first element of ${\bf \bar U}_t^*$, namely ${\bf \bar u}_{\left. 0 \right|t}$, drives the initial state ${\bf \bar x}_{0}$ to the successor state ${\bf \bar x}_{\left. 1 \right|t}$.  At time step $t + 1$, the steady-state target is ${\bf \bar z}_{s,t+1} $, and a candidate feasible control sequence is generated from the previous optimal solution. This sequence is written as ${\bf \bar U}_{t + 1}  = \left\{ {{\bf \bar u}_{\left. 1 \right|t}^* ,...,{\bf \bar u}_{\left. {N - 1} \right|t}^* ,{\bf \kappa }_f } \right\}$, where ${\bf \kappa }_f$ is the terminal control policy. The resulting state trajectory is expressed as ${\bf \bar X}_{t + 1}  = \left\{ {{\bf \bar x}_{\left. 1 \right|t}^* ,...,{\bf \bar x}_{\left. N \right|t}^* ,{\bf \bar f}\left( {{\bf \bar x}^*_{\left. N \right|t} ,{\bf \kappa }_f,{\bf \hat d}^+ } \right)  } \right\}$, which  fulfills the initial nominal state constraint and the corresponding stage constraints. Furthermore, because  the maximum ellipsoidal set $\mathbb{\bar T }$ is positively invariant,  ${\bf \bar x}_{\left. N \right|t}^* \in \mathbb{\bar T }$, and ${\bf \hat d}^+  \in  \mathbb{W}$ $\Rightarrow$ ${\bf \bar f}\left( {{\bf \bar x}^*_{\left. N \right|t} ,{\bf \kappa }_f,{\bf \hat d}^+ } \right) \in \mathbb{\bar T }$, the terminal constraint of ${\bf \bar X}_{t + 1}$ is satisfied. Therefore, at least  one solution exists at time $t+1$, which ensures the recursive feasibility of the regulation problem. $\blacksquare$

\section*{II. Input-to-State Stability }
\setcounter{equation}{0}
\renewcommand{\theequation}{B-\arabic{equation}}

\begin{theorem}
Under the terminal ellipsoid constraint $\mathbb{\bar T }$, the closed-loop system exhibits ISS with respect to the steady-state target variation rate ${\bf \delta \bar x}_s$.
\end{theorem}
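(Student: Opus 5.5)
The plan is a Lyapunov argument with the optimal value function of Problem P2' as an ISS-Lyapunov function, where the shift of the steady-state target plays the role of the disturbance input. Write $V_N(\bar{\bf x}_0,\bar{\bf x}_s)$ for the optimal cost $J^*$ of Problem P2' with target $\bar{\bf x}_s$, and work in the error coordinates $\tilde{\bf x}=\bar{\bf x}-\bar{\bf x}_s$. Under the standard MPC axioms assumed in the recursive-feasibility theorem (${\bf Q},{\bf R}\succ 0$, ${\bf P}$ from the DARE (9), $\mathbb{\bar T}$ positively invariant under the terminal policy ${\bf\kappa}_f$), I would establish three properties.

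\emph{Step 1: class-$\mathcal K_\infty$ bounds.} Since ${\bf Q}\succ 0$, the lower bound $V_N \ge \lambda_{\min}({\bf Q})\|\tilde{\bf x}_0\|^2$ is immediate. For the upper bound I would use the terminal-policy sequence as a feasible candidate on $\mathbb{\bar T}$. Because ${\bf P}$ solves the DARE, this candidate costs at most $\tilde{\bf x}_0^T{\bf P}\tilde{\bf x}_0$. A standard compactness argument, using the boundedness of $\mathbb X$, then extends $V_N\le \alpha_2(\|\tilde{\bf x}_0\|)$ to the whole feasible set.

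\emph{Step 2: descent for a frozen target.} With $\bar{\bf x}_{s,t+1}=\bar{\bf x}_{s,t}$, I would take the shifted candidate $\bar{\bf U}_{t+1}$ from the recursive-feasibility proof and use the DARE identity for the appended terminal step. This is the usual telescoping argument and gives $V_N(\bar{\bf x}_{1|t}^*,\bar{\bf x}_{s,t})-V_N(\bar{\bf x}_0,\bar{\bf x}_{s,t})\le -\lambda_{\min}({\bf Q})\|\tilde{\bf x}_0\|^2$.

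\emph{Step 3: sensitivity to the target shift.} This step supplies the ISS term. I would evaluate the same candidate sequence against the new target $\bar{\bf x}_{s,t+1}=\bar{\bf x}_{s,t}+{\bf \delta \bar x}_s$. Feasibility still holds by the recursive-feasibility theorem, since the enlarged ellipsoid with $c_f$ from (28) contains all shifted $\mathbb{\bar T}_i$. Each quadratic term changes by $\|{\bf a}-{\bf \delta}\|_M^2-\|{\bf a}\|_M^2\le 2\|M\|\,\|{\bf a}\|\,\|{\bf \delta}\|+\|M\|\,\|{\bf \delta}\|^2$. The input part $\bar{\bf u}_s$ also shifts, by $\Pi_u{\bf \delta\hat d}$, which I would bound through $\|\Pi\|_2$ in the same way. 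Because all states and inputs lie in bounded sets $\mathbb X,\mathbb U$, the total change is at most $\sigma(\|{\bf \delta \bar x}_s\|)=c_1\|{\bf \delta \bar x}_s\|+c_2\|{\bf \delta \bar x}_s\|^2$. Here $c_1,c_2$ depend on $N$, $\|{\bf Q}\|,\|{\bf R}\|,\|{\bf P}\|$, the diameters of the constraint sets, and $\|\Pi\|_2/\|\Pi_x\|_2$ via (25) and (27).

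Combining the three steps gives $V_N(t+1)-V_N(t)\le-\alpha_3(\|\tilde{\bf x}_t\|)+\sigma(\|{\bf \delta\bar x}_s\|)$ with $\sigma\in\mathcal K$, so $V_N$ is an ISS-Lyapunov function. The standard ISS-Lyapunov theorem (Jiang--Wang) then gives the claim. One bookkeeping point: the error coordinate itself moves with the target. I would handle this by writing $\tilde{\bf x}_{t+1}$ with respect to the new target and absorbing the difference into $\sigma$ via the triangle inequality.

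I expect Step 3 to be the main obstacle. The cross terms are only linear in $\|{\bf \delta \bar x}_s\|$, and bounding them uniformly depends on compactness of the feasible set and on the terminal inclusion property asserted after (28). That inclusion should be checked carefully. If $c_f$ is measured in the ${\bf P}$-norm, (28) presumably needs $\sqrt{\lambda_{\max}({\bf P})}\,\|{\bf \delta\bar x}_s\|_{\max}$ rather than $\|{\bf \delta\bar x}_s\|_{\max}$. If so, I would adopt that corrected scaling, which leaves the conclusion unchanged.
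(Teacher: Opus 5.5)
Your skeleton matches the paper's proof. The optimal cost of Problem P2$'$ serves as the Lyapunov function. The shifted candidate is closed by the terminal law, so the DARE gives the drop $-\|\bar{\bf x}_0-\bar{\bf x}_{s,t}\|_{\bf Q}^2$. A perturbation bound for the target switch then leads to (B-6). Three of your additions improve on the paper: the sandwich bounds in Step 1, which ISS requires but the paper never states; the $\Delta J_u$ term, which the paper drops ``for simplicity''; and your remark that (28) needs the factor $\sqrt{\lambda_{\max}({\bf P})}$, which is correct.

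\textbf{The gap is in Step 3.} There you turn the whole perturbation into a class-$\mathcal K$ function of $\|{\bf \delta\bar x}_s\|$ by using $\|\Pi\|_2/\|\Pi_x\|_2$. That ratio only relates the worst-case bounds (25) and (27). It does not give $\|{\bf \delta\bar u}_s\|\le c\,\|{\bf \delta\bar x}_s\|$, and no such $c$ exists in this model:
\begin{itemize}
\item Rows 0 and 2 of (10a) force $\dot\Delta_s=\dot\theta_s=0$, and ${\bf HC}\bar{\bf x}_s=0$ forces $\Delta_s=0$. Hence $\bar{\bf x}_s=[0,0,\theta_s,0]^T$, with $\theta_s$ a single linear functional of $\hat{\bf d}\in\mathbb R^2$, so $\Pi_x$ has rank one.
\item $\Pi$ itself is injective. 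Along the direction $\beta_0\,\delta\hat d_0=\beta_1\,\delta\hat d_1$ one therefore has ${\bf \delta\bar x}_s=0$ but ${\bf \delta\bar u}_s\neq 0$.
\item Your $\sigma$ would thus need $\sigma(0)>0$.
\end{itemize}
What your argument actually delivers is $\sigma$ evaluated at the constant $\|{\bf \delta\bar x}_s\|_{\max}$. That is exactly the paper's (B-6), which is an ultimate-boundedness (practical stability) estimate. It is not an ISS-Lyapunov inequality, so the Jiang--Wang theorem does not apply.

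\textbf{A related problem: the candidate is not admissible.} Reusing ``the same candidate sequence'' is not admissible at $t+1$ once $\hat{\bf d}^+$ changes. Constraint (12a) then uses $\hat{\bf d}^+_{t+1}$, and this model change is exactly what generates ${\bf \delta\bar x}_s$.

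\textbf{A repair: deviation coordinates.} Both issues are handled by working in $\tilde{\bf x}=\bar{\bf x}-\bar{\bf x}_s$, $\tilde{\bf u}=\bar{\bf u}-\bar{\bf u}_s$.
\begin{itemize}
\item By (10a), $\tilde{\bf x}_{i+1}={\bf A}\tilde{\bf x}_i+{\bf B}\tilde{\bf u}_i$, independently of $\hat{\bf d}$.
\item Reusing the old deviation inputs (and the terminal law at the end) therefore gives a dynamically consistent candidate.
\item Its deviation states differ from the old ones only by $-{\bf A}^i{\bf \delta\bar x}_s$. The cost perturbation is then a genuine $c_1\|{\bf \delta\bar x}_s\|+c_2\|{\bf \delta\bar x}_s\|^2$, with no ${\bf \delta\bar u}_s$ term.
\item The price is that this candidate's absolute inputs move by ${\bf \delta\bar u}_s$ and its states by $({\bf I}-{\bf A}^i){\bf \delta\bar x}_s$. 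Its constraint satisfaction is therefore an extra assumption, beyond the unshifted sequence checked in the recursive-feasibility proof.
\end{itemize}

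\textbf{Alternative repair.} Take ${\bf \delta\hat d}$, or equivalently ${\bf \delta\bar z}_s$, as the ISS input. Then every perturbation term, including the model shift $\sum_{j<i}{\bf A}^j{\bf B}_d{\bf \delta\hat d}$, is $O(\|{\bf \delta\hat d}\|)$, and your $\|\Pi\|_2$ bounds go through.
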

\textit{Proof:} Let $J^{\rm *} \left( {{\bf \bar X}_{t}^* ,{\bf \bar U}_{t}^* ,{\bf \bar z}_{s,t} } \right)$ denote the optimal cost function at time step $t$, which serves as a candidate Lyapunov function. At time step $t+1$, the  optimal cost function must be less than or equal to the cost evaluated using a candidate feasible sequence as follows:
\begin{equation}
J^{\rm *} \left( {{\bf \bar X}_{t + 1}^* ,{\bf \bar U}_{t + 1}^* ,{\bf \bar z}_{s,t + 1} } \right) \le J\left( {{\bf \bar X}_{t + 1} ,{\bf \bar U}_{t + 1} ,{\bf \bar z}_{s,t + 1} } \right).
\end{equation}
To determine whether the optimal cost function at time $t+1$ is better (lower) than that that at time $t$, the difference between these functions is calculated as follows:
\begin{equation}
\begin{split}
&J^{\rm *} \left( {{\bf \bar X}_{t + 1}^* ,{\bf \bar U}_{t + 1}^* ,{\bf \bar z}_{s,t + 1} } \right) - J^{\rm *} \left( {{\bf \bar X}_t^* ,{\bf \bar U}_t^* ,{\bf \bar z}_{s,t} } \right) \\& = \Delta J_x - J_x^{\rm *} \left( {{\bf \bar x}_0  , {\bf \bar x}_{s,t} } \right) - J_f^* \left( {{\bf \bar x}_{\left. N \right|t}^*  , {\bf \bar x}_{s,t} } \right) \\&+ J_x^* \left( {{\bf \bar x}_{\left. {N - 1} \right|t + 1}^*  , {\bf \bar x}_{s,t + 1} } \right) + J_f^* \left( {{\bf \bar x}_{\left. N \right|t + 1}^*  , {\bf \bar x}_{s,t + 1} } \right).
\end{split}
\end{equation}
Here, $\Delta J_x$ refers to the change in the state stage cost caused by target switching, which is bounded by the maximum target variation. The variation in the control stage cost ($\Delta J_u$) is ignored here for simplicity, and $\Delta J_u$ can be analyzed using the same approach as that used for $\Delta J_x$. The last two terms of (B-2) fulfill  Lyapunov descent inequality, as expressed in the following equation:
\begin{equation}
\begin{split}
&J_x^* \left( {{\bf \bar x}_{\left. {N - 1} \right|t + 1}^*  , {\bf \bar x}_{s,t + 1} } \right) + J_f^* \left( {{\bf \bar x}_{\left. N \right|t + 1}^*  , {\bf \bar x}_{s,t + 1} } \right) \\
&\le J_f^* \left( {{\bf \bar x}_{\left. N \right|t}^*  , {\bf \bar x}_{s,t + 1} } \right).
\end{split}
\end{equation}
Combining (B-2) and (B-3) yields the following:
\begin{equation}
\begin{split}
&J^{\rm *} \left( {{\bf \bar X}_{t + 1}^* ,{\bf \bar U}_{t + 1}^* ,{\bf \bar z}_{s,t + 1} } \right) -  J^{\rm *} \left( {{\bf \bar X}_t^* ,{\bf \bar U}_t^* ,{\bf \bar z}_{s,t} } \right) \\&\le \Delta J_x - J_x^{\rm *} \left( {{\bf \bar x}_0  , {\bf \bar x}_{s,t} } \right) - J_f^* \left( {{\bf \bar x}_{\left. N \right|t}^*  , {\bf \bar x}_{s,t} } \right) \\&+ J_f^* \left( {{\bf \bar x}_{\left. N \right|t}^*  , {\bf \bar x}_{s,t + 1} } \right) 
\end{split}
\end{equation}
Under the Lipschitz continuity of the terminal cost and the bound $\left\| {{\bf \delta \bar x}_s } \right\|_{\max } $, the terms of (B-4) are bounded as follows:
\begin{equation}
\begin{split}
&\Delta J_x - J_f^* \left( {{\bf \bar x}_{\left. N \right|t}^* ,{\bf \bar x}_{s,t} } \right)+ J_f^* \left( {{\bf \bar x}_{\left. N \right|t}^* ,{\bf \bar x}_{s,t + 1} } \right)  \\&\le \Delta J_x +\alpha\left\| {{\bf \delta \bar x}_s } \right\|_{\max } \le  \beta \left\| {{\bf \delta \bar x}_s } \right\|_{\max }, 
\end{split}
\end{equation}
where $\alpha > 0$ and $\beta >0$ are constants. Combining the stage cost drop and the maximum target variation bound yields the ISS Lyapunov inequality, which is expressed as follows:
\begin{equation}
\begin{split}
&J^{\rm *} \left( {{\bf \bar X}_{t + 1}^* ,{\bf \bar U}_{t + 1}^* ,{\bf \bar z}_{s,t + 1} } \right)  - J^{\rm *} \left( {{\bf \bar X}_t^* ,{\bf \bar U}_t^* ,{\bf \bar z}_{s,t} } \right) \\& \le - \left\| {{\bf \bar x}_0  - {\bf \bar x}_{s,t} } \right\|_{\bf Q}^2  +  \beta \left\| {{\bf \delta \bar x}_s } \right\|_{\max }. 
\end{split}
\end{equation}
Thus, the cost function remains bounded around the origin, resulting in ISS with respect to disturbance variations. $\blacksquare$



\end{document}